\newtheorem{thm}{Theorem}
\newtheorem{prop}{Proposition}
\begin{document}

\title{Invariant Deep Compressible Covariance Pooling for Aerial Scene Categorization}

\author{Shidong~Wang, Yi~Ren, Gerard Parr \IEEEmembership{Member,~IEEE}
	Yu~Guan and~Ling~Shao, \IEEEmembership{Senior Member,~IEEE}
	\thanks{Manuscript received May 1, 2020; revised June 24, 2020 and August 11, 2020; accepted September 15, 2020. This work was supported by the EPSRC DERC: Digital Economy Research Centre under Grant EP/M023001/1. \textit{(Corresponding author: Yu Guan.)}}
	\thanks{Shidong Wang is with Open Lab, School of Computing, Newcastle University, Newcastle upon Tyne NE4 5TG, U.K., also with the School of Computing Sciences, University of East Anglia, Norwich NR4 7TJ, U.K. (e-mail: shidong.wang@ncl.ac.uk)}
	\thanks{Yi Ren and Gerard Parr are with School of Computing Sciences, University of East Anglia, Norwich, U.K. (e-mail: e.ren and g.parr@uea.ac.uk)}
	\thanks{Yu Guan is with Open Lab, School of Computing, Newcastle University, Newcastle upon Tyne NE4 5TG, U.K. (e-mail: yu.guan@ncl.ac.uk).}
	\thanks{Ling Shao is with Inception Institute of Artificial Intelligence, Abu Dhabi, United Arab Emirates, e-mail:(ling.shao@ieee.org)}}

\markboth{Journal of IEEE Transactions on Geoscience and Remote Sensing, October~2020}%
{Shell \MakeLowercase{\textit{et al.}}: Bare Demo of IEEEtran.cls for IEEE Journals}

\maketitle

\begin{abstract}
Learning discriminative and invariant feature representation is the key to visual image categorization. In this article, we propose a novel invariant deep compressible covariance pooling (IDCCP) to solve nuisance variations in aerial scene categorization. We consider transforming the input image according to a finite transformation group that consists of multiple confounding orthogonal matrices, such as the D4 group. Then, we adopt a Siamese-style network to transfer the group structure to the representation space, where we can derive a trivial representation that is invariant under the group action. The linear classifier trained with trivial representation will also be possessed with invariance. To further improve the discriminative power of representation, we extend the representation to the tensor space while imposing orthogonal constraints on the transformation matrix to effectively reduce feature dimensions. We conduct extensive experiments on the publicly released aerial scene image data sets and demonstrate the superiority of this method compared with state-of-the-art methods. In particular, with using ResNet architecture, our IDCCP model can reduce the dimension of the tensor representation by about 98\% without sacrificing accuracy (i.e., \textless 0.5\%).
\end{abstract}

\begin{IEEEkeywords}
Invariant Feature Representation, Symmetric
Positive Definite (SPD) Manifold, Stiefel Manifold and Aerial Scene Categorization.
\end{IEEEkeywords}

\IEEEpeerreviewmaketitle

\section{Introduction}\label{introduction}
Aerial scene classification, also known as remote sensing scene classification, is considered to be one of the most active tasks in scene classification. The classification of aerial scene images involves a wide range of applications, such as environment monitoring, urban and agricultural planning, and land use and land cover (LULC) classification \cite{dataset:xia2017aid,dataset:cheng2017remote,mg_cap,li2019deep}. The recent development of remote sensing technologies leads to the accumulation of very high spatial resolution images (e.g., $\sim$1-4 m/pixel), which takes out aerial imagery characteristics to the new level of illustrating the geometry structure and texture peculiarities in a more distinct way. The increasingly spatial resolution of aerial images not only allows depicting image peculiarities across smaller spatial extents but also makes the classification more ambiguous and challenging.

\begin{figure}[]
	\centering
	\includegraphics[width=0.48\textwidth]{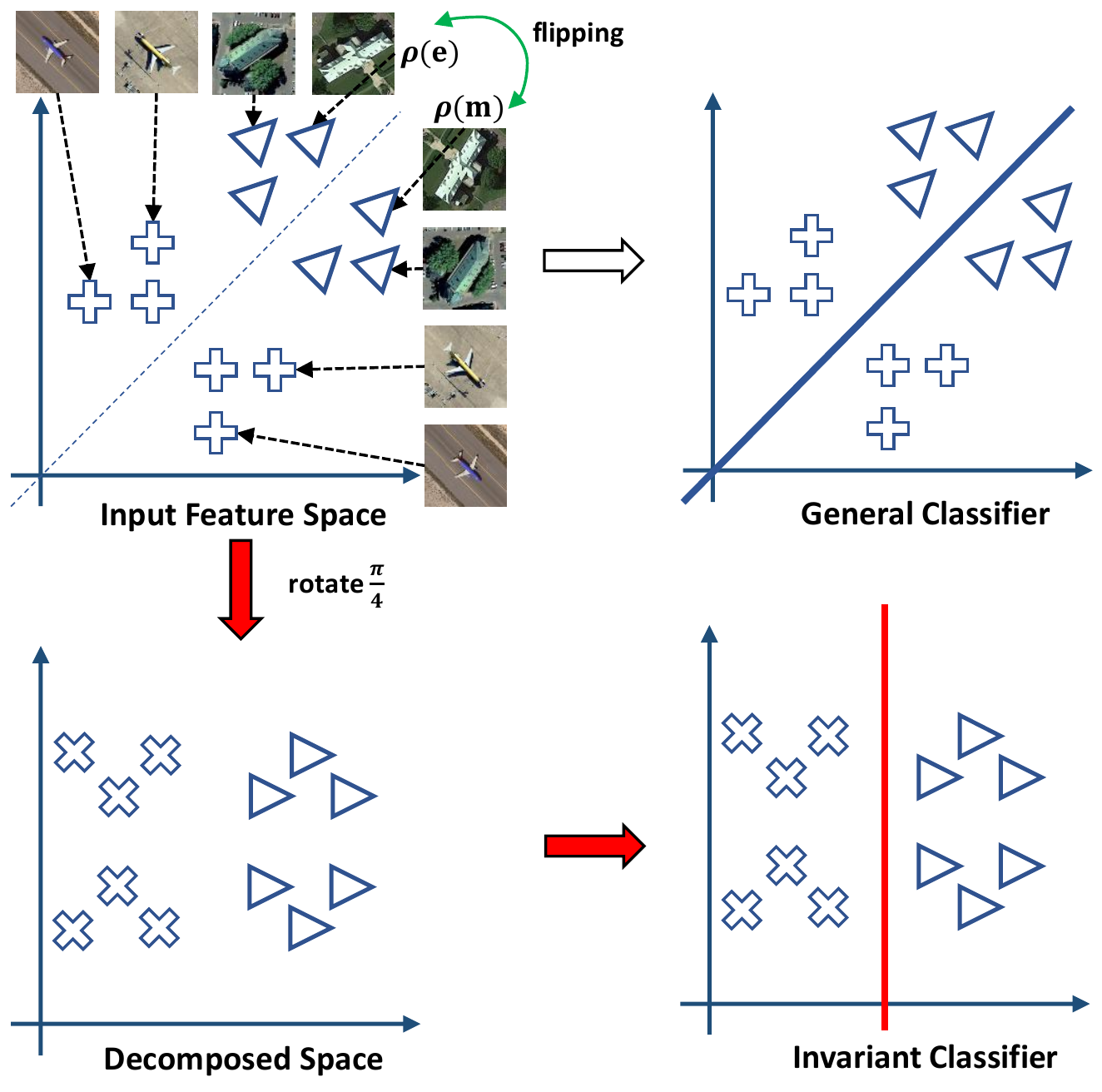}
	\caption{{Solely flipping the input image may make the conventional classifier inoperable. Combined with the rotation transformation, a new orthogonal representation space is formed. Then, we can infer a trivial representation from the space and leverage it to train our invariant classifier.}}
	\label{idccp:idccp_intro}
\end{figure}
Compared with conventional scene images, the texture
information of remote sensing images is more complicated.
The main reason for sophisticated texture features is the
variation of orientation, scale, and shape of objects in the image. In addition to these variations, the inherent property of remote sensing images is also quite different from the ordinary scene images. Precisely, remote sensing image, as one of the most representative overhead images, has no dominant left–right or up–down relationships. To classify a typical scene image, we only concern the absence or presence of the main object. However, in the aerial scene classification task, an expectation is that the model is capable of assigning the correct label for a given image regardless of its absolute orientation. This sought-after property remains strictly constant under all transformations of the input data, which is so-called \textit{invariance}.                 

Invariance can be directly encoded and considered to be the most effective method to mitigate the impact of variations of the input data. However, incorporating invariant information is challenging, even for the powerful convolutional neural network (CNN) architectures. Precisely, off-the-shelf CNN architectures are only endowed with the minimal internal structures
due to the costly computing of the optimization. These minimal intrinsic structures are capable of handling locally minor shifts but not global transformations. Data augmentation \cite{dataaugmentation} is widely adopted to incorporate the prior knowledge of input data, but there is no guarantee that the invariance learned in the training stage is effectively generalized for the test data. Furthermore, it is difficult to quantify the predominate transformations and lacks the interpretability of feature maps. In contrast to the redundant approaches, such as data augmentation, one of the latest research lines is toward procuring the equivariance from equivariant CNNs \cite{group:cohen2016group,group:cohen2016steerable,group:dieleman2016exploiting,group:henriques2017warped}. The basic idea of these methods is to learn the \textit{transformation-equivariant CNN} by constructing features in a linear \textit{G}-space and then derive an invariant subspace by employing the appropriate pooling method ({e.g.,} the coset pooling). These methods can detect co-occurrences of features at any \textit{positions} in a standard CNN architecture, and any preferred \textit{poses} in a \textit{G}-space, but the computational cost scales dramatically with the increasing cardinality of the group.

To address the shortcomings of the aforementioned approaches, a novel framework is proposed to derive the transformation-invariant subspace from a finite linear \textit{G}-group space, which allows group actions to be directly applied to the raw image. As shown in Fig. \ref{idccp:idccp_intro}, merely flipping the
local feature space can render the traditional classifier fail to work. Through looking insight into the flipping operation, we find that it can be expressed by the permutation matrices. The expression of permutation matrices implies two primary properties: the flipping operation acts orthogonally at the local pixel and prevents images from distortion during transformation. These motivate us to construct a transformation group \textit{G} where all the decomposed spaces are orthogonal to each other ({i.e.,} D4 group in our scenarios). An invariant feature space can be sought through using the reducible decomposition of the representations of \textit{G}-space. Namely, it allows us to decompose the action of \textit{G} into the direct sum of irreducible
representations and results in a locally invariant subspace that serves to train an invariant classifier.   

The orthogonal transformations prevent the pixel value
shifting in the process of transforming but cannot avoid the changes of pixel locations. To alleviate the effect of pixel position changes, we can calculate the tensor product of irreducible representations to form a global representation based on the fact that the reducible decomposition of the representation conforms to the group action of \textit{G}-space. The tensor representation contains more discriminative information than the conventional first-order feature but suffers from the high-dimensional problem. Considering that the second-order feature representation is a covariance matrix ({i.e.,} symmetric positive definite (SPD) matrix), we can force the weight matrix
to be a row full-rank matrix where all elements reside on a Stiefel manifold. In this way, it can produce a compact space while maintaining the geometry of the SPD manifold. Our contributions can be summarized as follows:    
\begin{itemize}    
	\item[-] We derive an invariant classifier from the learned weights of the trivial tensor representation with the guarantee of being invariant under the finite \textit{G}-group actions.    
	\item[-] We introduce a way of imposing orthogonal constraints on the weight matrix to effectively map the high-dimensional SPD manifolds into new compact manifolds.
	\item[-] We conduct extensive experiments on four aerial scene image datasets and achieve state-of-the-art performance.
\end{itemize}                                                          
\section{Related Works}
\subsection{Aerial Scene Classification}
The last decade has witnessed a dramatic growth of research interests in aerial scene image classification. Early attempts heavily rely on the manually designed features. Globally handcrafted features summarize the overall statistical properties and can be directly fed into classifiers. Examples include color descriptors \cite{hf_global:dos2010evaluating,hf_global:li2010object,hf_global:penatti2015deep}, texture descriptors \cite{lbp:ojala2002multiresolution,lbp:ren2015learning,lbp:huang2016remote} and histograms of oriented gradients (HOG) descriptors \cite{hf_hog:cheng2015effective,hf_hog:cheng2015learning}. Locally handcrafted features usually need to be transformed into the higher level representation using coding methods, such as the Bag-of-Visual-Words (BoVW) models \cite{dataset:yang2010bag,bovw:zhao2014land,bovw:chen2014pyramid} and the Fisher vector \cite{fv:zhao2016fisher,fv:wu2016hierarchical,lbp:huang2016remote}. Shortly, unsupervised learning methods \cite{usl:hu2015unsupervised,usl:yu2017unsupervised,usl:lu2017remote,dataset:cheng2017remote,usl:fan2017unsupervised} become increasingly popular to remedy the limitations of handcrafted features. Typical unsupervised feature learning methods include, but not limited to, principal component analysis (PCA) \cite{feat:pca}, sparse coding \cite{sparse_coding:zheng2012automatic,sparse_coding:cheriyadat2014unsupervised,sparse_coding:qi2016sparse}, autoencoder \cite{autoencoder:zhang2015saliency,autoencoder:othman2016using,autoencoder:ma2016spectral,autoencoder:li2016stacked} and K-means clustering that often associates with BoVW methods. Benefiting from the capability of
incorporating intricate structures hidden in high-dimensional data, deep learning-based methods present extensive popularity of being adopted for aerial image classification. For example, Cheng \textit{et al.} \cite{bovw:cheng2017remote} investigated the efficiency of the BoVW model utilizing convolutional features. \cite{cheng2016learning} learned the rotation-invariant feature to improve the performance of object detection in remotely sensed images. Other works include transferring learning-based methods \cite{tl:hu2015transferring,dl:castelluccio2015land,tl:marmanis2015deep,tl:cheng2016scene,dl:nogueira2017towards,tl:xie2019scale}, domain adaption \cite{tl:othman2017domain} and feature fusion \cite{fusion:chaib2017deep}. Feature fusion-based methods can also achieve gratifying results and have different forms that include multi-scale \cite{fusion:zhao2016scene},multi-layer \cite{fusion:li2017integrating}, multi-stream \cite{fusion:chaib2017deep,rtn} and multi-granularity \cite{mg_cap}. For more details, we refer readers to \cite{dataset:xia2017aid} and \cite{dataset:cheng2017remote}, and a comprehensive review of remote sensing image interpretation based on deep learning techniques \cite{li2019deep}.   
\subsection{Equivariant/Invariant CNN}
Existing methods capture the transformation-invariant information by transforming the inputs or the filters. For the former category, the most standard method is data augmentation \cite{dataaugmentation}. This approach aims to increase the capacity of the model in terms of some specific variations by generating abundant
training samples. RIFD-CNN \cite{group:cheng2016rifd} introduced an explicit regularization as a constraint that forces the model learning the invariance of CNN features. Ti-pooling \cite{group:laptev2016ti} employed a parallel Siamese architecture to extract features from multiple rotated images and provided a pooling across the different features at the first fully-connected layer. Henriques and
Vedaldi \cite{group:henriques2017warped} proved that the inherent translation equivariance
of CNNs could be worked on the warped images and achieved
the transformation equivariance. Recently, Chen \textit{et al.} \cite{rtn} proposed a recurrent transformer network (RTN), which exploited the spatial transformer network progressively and learned multiple transformation-invariant scales of the input image. For the latter category, existing methods usually construct features using group representations. For example, Cohen and Welling \cite{group:cohen2016group} learned feature invariance by using the coset pooling for an asymmetry transformation group space that is composed of dihedral flipping and four $ 90^{\circ} $ rotations. This work had been extended in \cite{group:cohen2016steerable} to decouple the computational cost by performing the general, steerable representations. Instead of rotating filters, Dieleman \textit{et al.} \cite{group:dieleman2016exploiting} proposed to rotate feature maps at four $ 90^{\circ} $ rotations and preserved equivariance in CNNs. In addition, Mukuta and Harada \cite{mukuta2019invariant} and Sokolic \textit{et al.} \cite{sokolic2017generalization} analyzed the significance of encoding first- or second-order invariant features in learning algorithms.            
\subsection{Second-order Statistics Feature}
Bilinear pooling \cite{sos:lin2015bilinear}, as one of the most successful second-order pooling methods, collects second-order statistics of local CNN features over the entire image to form a holistic representation. DeepO$_{2}$P \cite{sos:ionescu2015matrix} performed a matrix back-propagation structure for both singular value decomposition (SVD) and eigenvalue decomposition (EIG). Improved bilinear pooling \cite{sos:lin2017improved} investigated the performance of using the combination
of different normalization methods, such as the matrix square root normalization, an element-wise square root, and $ {l}_2 $ normalisation. Acharya \textit{et al.} \cite{sos:acharya2018covariance} proposed a covariance pooling framework that exploited the Riemannian manifold for facial expression recognition.  Various methods have been proposed to reduce the high-dimensional of the bilinear feature, for example, Random Maclaurin \cite{sos:gao2016compact}, Tensor Sketch \cite{sos:gao2016compact}, low-rank constraints \cite{sos:kong2017low}, Gaussian RBF kernel \cite{sos:cui2017kernel} and Grassmann manifold \cite{sos:wei2018grassmann}. Besides, iSQRT-COV \cite{sos:li2018towards} provided a method to speed up the calculating of the square root of the global covariance matrix by using Newton-Schulz iteration in both forward and backward propagations.
\section{Preliminary Notions and Definitions}  
We use calligraphic typeface $\mathcal{X}$ and $\mathcal{F}$ to denote the input image and the deep CNN features, respectively. A group $G=(\mathcal{X}, \bullet)$ is the pair of a set $\mathcal{X}$, together with an operation $\bullet:\mathcal{X} \times \mathcal{X} \to \mathcal{X}$ (also known as group law) that satisfies the group axioms of closure, associativity, identity and invertibility. The number of elements in a finite $ \mathcal{X} $ is denoted as $ |\mathcal{X}| $. A homomorphism is a map from a group \textit{G} to the group of automorphisms of a vector space \textit{V} that preserves group action operations, $\rho(g_{1}) \bullet \rho(g_{2})=\rho(g_{1} \bullet g_{2}), \forall {g_1},{g_2} \in G$ and exists the \textit{d}-dimensional identity matrix $\rho(e)=1_{d\times d}$. For a concrete example, $\rho:G \to$ GL$(V)$ is a homomorphism and also called a representation, where GL is the general linear space. A representation is named a trivial representation if and only if it maps all $g \in G$ to $1_{d\times d}$ (e.g., one-dimensional trivial representation is denoted as \textbf{1}). Similarly, the representation is called a unitary representation or orthogonal representation when all $\rho(g)$ are unitary matrices or orthogonal matrices. The space of intertwining operator is written as Hom$_{\mathcal{X}}(\rho,\rho^{'})$ which implies that there is a linear operator $L:\mathbb{C}^{d} \to \mathbb{C}^{d^{'}}$ that satisfies $ L \bullet \rho(g)=\rho{'}(g) \bullet L$. If \textit{L} is a bijective function that satisfies $L \in $ Hom$_{\mathcal{X}}(\rho,\rho^{'})$, we will write it as $ \rho \simeq \rho^{'} $. Given two representations $(\rho, V_{1})$ and $(\sigma, V_{2})$ of the same group \textit{G}, the direct sum of these two representations is given as $\rho \oplus \sigma:G \to$GL$(V_{1} \oplus V_{2})$ with regarding G as block-diagonal form of $G \times G$. According to Schur's Lemma, Hom$_{G}(\rho_{1},\rho_{2})=\left\lbrace 0 \right\rbrace$ if $\rho_{1}$ and $\rho_{2}$ are not isomorphic or 1-D when they are isomorphic. If $\rho$ and $\sigma$ are in tensor spaces, the tensor representation will be denoted as $\rho \otimes \sigma$. The character function $\mathcal{X}_{\rho}$ that maps \textit{G} into a finite-dimensional vector space over a filed \textit{F} is given by $\mathcal{X}_{\rho}(g)=\text{tr}(\rho(g))$, where $\text{tr}(\cdot)$ is the trace operation. The degree of a representation $\rho$ is the dimension of its representation space \textit{V} and we denote it as dim($\rho$).          
\section{Method}
\begin{figure*}[]
	\centering
	\includegraphics[width=0.98\textwidth]{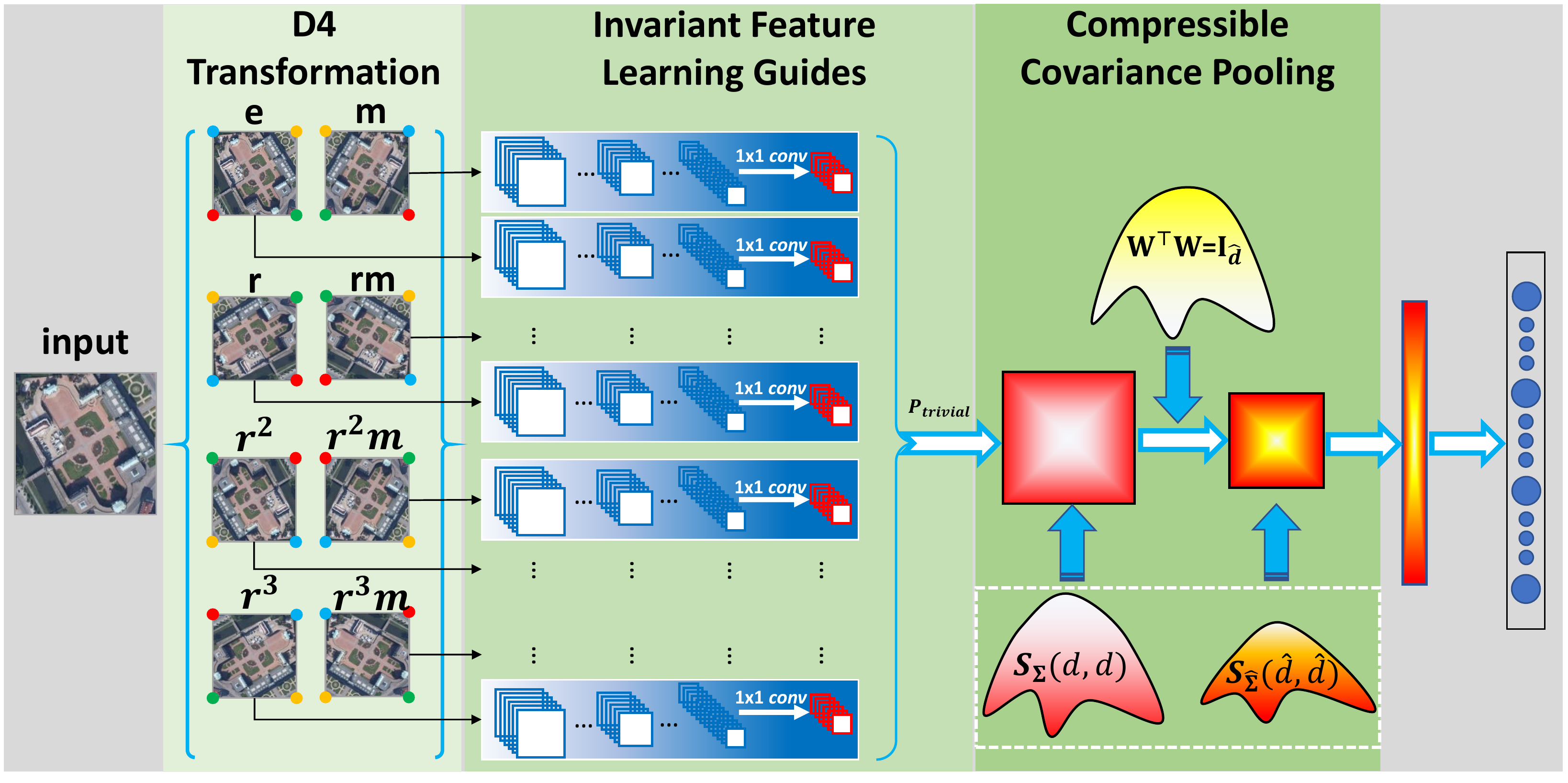}
	\caption{{An overview of the proposed IDCCP architecture. Given an input image, it will be used to generate multiple copies according to the D4 principle. Then, each copy will be fed into a subnetwork of Siamese-style CNNs to extract feature (Note:\textit{1$ \times $1 conv} is only adopted in the Siamese architecture with ResNet50 as the backbone). $ P_{trivial} $ is the projection layer to produce a trivial representation. Subsequently, orthogonal weights are adopted to compress high-dimensional manifold $ S_{\Sigma}(d,d) $ to a compact manifold $ S_{\hat\Sigma}(\hat{d},\hat{d}) $. The resulting features will be flattened and fed into the classifier to generate predictions.}}
	\label{idccp:idccp}
\end{figure*}
\subsection{Transformation-Equivariant Networks} 
In deep learning models, the transformation-equivariant
preserves the capacity to capture various useful transformations. An example is the translation-equivariant in convolution
layers, which can be exploited in any layers of the deep
CNN architecture. Given an input image $\mathcal{X}$, the transformation equivariant can be regarded as seeking a unique $T_{g}^{'} \in G^{'}$ that satisfies:
\begin{equation} \label{equivariance}
	\Phi(T_{g}(\mathcal{X}))=T_{g}^{'}(\Phi(\mathcal{X}))
\end{equation}
where $T_{g}^{'}$ is an action in a group structure $G^{'}$ and $\Phi$ denotes the feature mapping function. For brevity, it is usually written as $\Phi(T_{g}(\mathcal{X}))=T_{g}(\Phi(\mathcal{X}))$ since $T_{g}^{'}=T_{g}$ and then $ G^{'}=G$. However, we prefer the former format since $ \Phi(\mathcal{X}) $ and $ T_{g}(\mathcal{X}) $, perhaps, lie in the different domains. Two strategies can be derived from the definition to achieve the equivariance to transformations. On the one hand, $T_{g}^{'}(\Phi(\mathcal{X}))$ indicates
an explicit way to learn equivariance of transformations by transforming kernels or feature maps extracted from the input image, such as \cite{group:cohen2016group,group:dieleman2016exploiting}. However, these methods are generally inefficient because they require complicated permutations of each convolution kernel in all convolutional layers and need retraining on large-scale datasets. In addition, they neglect the
manipulation of shared weights between convolution kernels, which makes them difficult to transfer or scale to new challenging tasks. $\Phi(T_{g}(\mathcal{X}))$, on the other hand, offers an option to achieve transformation-equivariant by transforming input image directly. However, this branch arises less attention or has been referred to data augmentation method \cite{dataaugmentation}.  

To cope with the abovementioned problems, we propose
a novel framework to achieve equivariance by directly transforming input images and extracting the corresponding features with multiple CNNs. As shown in Fig \ref{idccp:idccp}, we first transform the input image according to a $ D_{4} $ transformation group that consists of image reflections and rotations by multiples of $90^{\circ}$. The main reason for choosing the $ D_{4} $ group is that the group is a regular and symmetrical polygon. In other words, it implies that any actions in a $ D_{4} $ group can prevent the image transformation from distortion. Once the transformed images have been obtained, we focus on seeking for an architecture that is effective to retain the group structure during the feature extraction. The naive way is that we adopt as many CNN networks as the order of the $ D_{4} $ group. However, this method will exponentially increase the
computational burdens. To address this problem, we exploit a Siamese-style architecture for feature extraction, which allows the weights to be shared among all subnetworks. To show how it works for preserving group structure, we provide the following proposition and the corresponding proof.           

\begin{prop}
	Let $\mathcal{X}$ be a set of images with the structure of symmetry square dihedral $ D_{4} $ group, so $ D_{4} = \left \langle r,m:r^{4}=m^{2}=e, rm=mr^{-1} \right \rangle $ and let $\Phi:Siam(\mathcal{X}) \to \mathcal{F}$ be the feature extraction function. Then, the resulting features $\mathcal{F}$ will be given in the structure of the $ D_{4} $ group.
\end{prop}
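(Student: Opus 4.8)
The plan is to show that the eight-branch Siamese network, precisely because it applies one \emph{shared} weight map to every branch, converts the $D_4$-action on the input image into a permutation of the feature branches, and that this permutation is itself a representation of $D_4$ (the regular representation); hence the output features $\mathcal{F}$ carry a $D_4$-structure, independently of what the shared weights happen to be.

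First I would fix notation for the construction. Write $T_g$ for the orthogonal action of $g\in D_4$ on the image domain, so that $Siam(\mathcal{X})$ is the ordered tuple $(T_g\mathcal{X})_{g\in D_4}$ of the eight transformed copies of a base image, and let $\phi$ denote the single subnetwork together with its shared weights. Then $\Phi$ acts branchwise, $\Phi(Siam(\mathcal{X})) = (\phi(T_g\mathcal{X}))_{g\in D_4} \in \mathcal{V}^{|D_4|} = \mathcal{F}$, where $\mathcal{V}$ is the per-branch feature space. A preliminary check is that $g\mapsto T_g$ is itself a homomorphism, i.e. $T_{g_1}T_{g_2}=T_{g_1 g_2}$ and $T_e=\mathrm{id}$; this is exactly the hypothesis that $\mathcal{X}$ carries the $D_4$-structure, and it has only to be verified on the two generators $r,m$ using the relations $r^4=m^2=e$ and $rm=mr^{-1}$.

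Next I would compute the effect of acting on the input by an arbitrary $h\in D_4$. Feeding $T_h\mathcal{X}$ through the network, branch $g$ now receives $T_g(T_h\mathcal{X}) = T_{gh}\mathcal{X}$, so $\Phi(Siam(T_h\mathcal{X})) = (\phi(T_{gh}\mathcal{X}))_{g\in D_4}$. Writing $f_g := \phi(T_g\mathcal{X})$ for the components of the original output, the new output is $(f_{gh})_{g\in D_4}$; that is, $h$ acts on $\mathcal{F}$ by the permutation $P_h:(f_g)_g\mapsto (f_{gh})_g$. I would then verify directly that $P_{h_1}P_{h_2}=P_{h_1 h_2}$ and $P_e=\mathrm{id}$, so that $h\mapsto P_h$ is a homomorphism $D_4\to\mathrm{GL}(\mathcal{F})$ --- namely the regular representation of $D_4$ permuting the eight branches by right translation $g\mapsto gh$ --- and conclude that $\mathcal{F}$ inherits the $D_4$-structure, with the equivariance identity $\Phi(Siam(T_h\mathcal{X})) = P_h\,\Phi(Siam(\mathcal{X}))$ holding for every $h\in D_4$.

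The step that requires the most care is not a calculation but making precise what ``$\mathcal{F}$ is given in the structure of the $D_4$ group'' should mean, together with spelling out why weight sharing is indispensable: it is exactly the identity $\phi_g=\phi$ across branches that forces $P_h$ to be independent of $\phi$, so that the conclusion needs no assumption whatsoever on the learned weights. I would also note that the same argument shows that every subsequent branchwise operation --- in particular the trivial projection $P_{trivial}$ applied afterwards --- commutes with each $P_h$, which is precisely what later makes the invariant pooling well-defined.
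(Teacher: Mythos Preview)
Your argument is correct, but it is genuinely different from the paper's. The paper does \emph{not} treat the subnetwork $\phi$ as a black box; instead it opens up the CNN and proves equivariance layer by layer. Concretely, it writes the 2-D convolution $[T_g(\mathcal{X})*K](i,j)$ explicitly and, by the substitutions $u\to u+t$, $u\to -u$, $(u,v)\to r(u,v)$, derives the three identities
\[
[\Phi_t T_g(\mathcal{X})]*K=\Phi_t[T_g(\mathcal{X})*K],\quad
[\Phi_m T_g(\mathcal{X})]*K=\Phi_m[T_g(\mathcal{X})*\Phi_{-m}K],\quad
[\Phi_r T_g(\mathcal{X})]*K=\Phi_r[T_g(\mathcal{X})*\Phi_{r^{-1}}K],
\]
and then cites that pooling commutes with the group action. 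From this per-layer equivariance it concludes that the stack of feature maps inherits the $D_4$ structure of the input copies.

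What the two routes buy is different. Your regular-representation argument is cleaner and more general: it needs nothing about convolution or pooling, only that the same $\phi$ is applied to every branch, and it makes explicit that the induced $D_4$-action on $\mathcal{F}$ is the permutation $P_h$ of branches (so that the later averaging $P_{trivial}$ is visibly the projection onto the trivial isotype). The paper's approach, on the other hand, establishes something your black-box argument cannot: that each individual feature map $\phi(T_g\mathcal{X})$ is itself a spatially transformed version of $\phi(\mathcal{X})$ (up to a kernel transformation), i.e.\ equivariance \emph{within} a branch rather than merely across branches. That internal statement is what justifies thinking of the features themselves, not just their ordering, as ``$D_4$-structured''. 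Both arguments suffice for the proposition as stated; they simply illuminate different aspects of why the Siamese construction works.
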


\begin{proof} 
	Let $T_{g}(\mathcal{X})$ be an action result of input $ D_{4} $ group image and $ K $ be the convolution kernel of general CNN. The convolution operation on a 2-D image can be denoted as:
	\begin{equation}\label{idccp:convolution}
		\left[ {T_{g}(\mathcal{X}) * K} \right](i,j) = \sum\limits_u {\sum\limits_v {T_{g}(\mathcal{X})(u,v)K(i - u,j - v)} },
	\end{equation}
	{Then, we can exploit $u \to u+t, v \to v+t$, $u \to -u, v \to -v$, and $ (u,v) \to r(u,v) $ (i.e., the substitution does not change the summation bounds since rotation is a symmetry of the
		sampling grid) to prove the relationships between convolution and translation, flip, and rotation, respectively. Results are:
		\begin{equation}\label{idccp:equivariant}
			\begin{split}
				&[{\Phi_t}T_{g}(\mathcal{X})] * K(i,j)={\Phi_t}[T_{g}(\mathcal{X}) * K](i,j)\\
				&[{\Phi_m}T_{g}(\mathcal{X})] * K(i,j)={\Phi_m}[T_{g}(\mathcal{X}) * \Phi_{-m}K](i,j)\\
				&[{\Phi_r}T_{g}(\mathcal{X})] * K(i,j)={\Phi_r}[T_{g}(\mathcal{X}) * {\Phi_{{r^{ - 1}}}}K](i,j)
			\end{split},
	\end{equation}}
\end{proof}
A similar visual proof of the abovementioned relationships
between convolution and transformations can be found in \cite{group:dieleman2016exploiting}. Furthermore, the pooling function that exists in CNN architecture has been proven to be commuted with the group action \cite{group:cohen2016group}. Hence, if an ordinary Siamese-style CNN learns transformed copies of the input image, the stack of feature maps will attain the same group structure as the transformed copies. It must be emphasized that the orientations of rotation may appear in either clockwise or counterclockwise depending on the implementation environment. If we let $ T_{g} $ and $ T_{g}^{'} $ be actions on the sets of $\mathcal{X}$ and $\mathcal{F}$ that satisfy $ T_{g_{1}g_{2}}= T_{g_{1}} \bullet T_{g_{2}}$ and $ T_{g_{1}g_{2}}^{'}= T_{g_{1}}^{'} \bullet T_{g_{2}}^{'}$, the transformations $ T_{g} $ and $ T_{g}^{'} $ will induce actions $ \textbf{T}_{g} $ and $ \textbf{T}_{g}^{'} $ on the space of $\mathcal{X}$ and $\mathcal{F}$. The difference between two spaces of $\mathcal{X}$ and $\mathcal{F}$ is the space field rather than the group structure. Thus, the transformation group of the input image can be preserved by using the Siamese-style CNNs. 
\begin{table*}[]
	\caption{The irreducible representations of the roto-reflection D4 group \cite{group:cohen2016steerable}.}
	\centering
	\scalebox{1.1}{
		\setlength{\tabcolsep}{3mm}{
			\begin{tabular}{c|c|c|c|c|c|c|c|c}
				\toprule
				Irrep. & e                                                     & r        & $ r^2 $ & $ r^3 $ & m        & mr       & m$ r^2 $ & m$ r^3 $ \\ \midrule
				$ \rho_{1,1} $                          & {[}1{]}                                               & {[}1{]}  & {[}1{]}              & {[}1{]}              & {[}1{]}  & {[}1{]}  & {[}1{]}               & {[}1{]}               \\
				$ \rho_{1,-1} $
				& {[}1{]}                                               & {[}1{]}  & {[}1{]}              & {[}1{]}              & {[}-1{]} & {[}-1{]} & {[}-1{]}              & {[}-1{]}              \\
				$ \rho_{-1,1} $
				& {[}1{]}                                               & {[}-1{]} & {[}1{]}              & {[}-1{]}             & {[}1{]}  & {[}-1{]} & {[}1{]}               & {[}-1{]}              \\
				$ \rho_{-1,-1} $
				& {[}1{]}                                               & {[}-1{]} & {[}1{]}              & {[}-1{]}             & {[}-1{]} & {[}1{]}  & {[}-1{]}              & {[}1{]}               \\
				$ \rho_{2} $
				& $ \begin{bmatrix} 1 & 0 \\ 0 & 1 \\ \end{bmatrix} $ &$ \begin{bmatrix} 0 & -1 \\ 1 & 0 \\ \end{bmatrix} $          &$ \begin{bmatrix} -1 & 0 \\ 0 & -1 \\ \end{bmatrix} $                      &$ \begin{bmatrix} 0 & 1 \\ -1 & 0 \\ \end{bmatrix} $                      &$ \begin{bmatrix} 1 & 0 \\ 0 & -1 \\ \end{bmatrix} $          &$ \begin{bmatrix} 0 & 1 \\ 1 & 0 \\ \end{bmatrix} $          &$ \begin{bmatrix} -1 & 0 \\ 0 & 1 \\ \end{bmatrix} $                       &$ \begin{bmatrix} 0 & -1 \\ -1 & 0 \\ \end{bmatrix} $ \\ \bottomrule                      
	\end{tabular}}}\label{idccp:d4_group}
\end{table*}
\subsection{Invariant Feature Learning Guides}
Learning invariant features, as a particular case of learning equivariant features, is essential for many recognition tasks. It turns out that adopting a Siamese-style architecture can preserve the structure of the predefined transformations of inputs $ \mathcal{X} $. The next step is to find the invariant subspace from the generated feature space $\mathcal{F}$. Because we assume that $ \rho (g) $ are all orthogonal representations, it means that they are also unitary representations that cannot be decomposed, thus enabling us to derive invariant subspaces from the perspective of irreducible representations. Taking the D4 group as an example, its irreducible representations have been summarized in TABLE. \ref{idccp:d4_group} where the orthogonality of the characters of representations can be verified.  

Considering the fact that orthogonal representation is a
real analog of unitary representation, the whole representation space can be formed by calculating the direct sum of all irreducible representations. For example, given a representation $\rho$, it can be decomposed by $\rho\simeq\lambda_{1}\tau_{1}\oplus\lambda_{2}\tau_{2}\oplus\dots\lambda_{T}\tau_{T}$. As the characteristic function of $\rho$ has been defined as $\mathcal{X}_{\rho}(g)=\text{tr}(\rho(g))$ with the matrix form $\rho(g)$ of $\rho$, the corresponding coefficients can be computed by using $\lambda_{t}=\frac{1}{|G|}\sum\limits_{g\in G} {\overline{\mathcal{X}_{\rho}(g)}\mathcal{X}_{\tau_{t}(g)}}$. The operator that projects $ \rho $ to $n_{t}\tau_{t}$ can be achieved by following $P_{\tau_{t}}=$dim$(\tau_{t})\sum\limits_{g\in G} {\overline{\mathcal{X}_{\tau_{t}}(g)}\rho(g)}$. Since $\mathcal{X}_\textbf{1}(g)=1$, we can obtain the trivial representation by calculating the average of $\rho(g)$:
\begin{equation}\label{idccp:trivial}
	P_{trivial}=\dfrac{1}{|G|}\sum\limits_{g\in G} {\rho(g)}.
\end{equation}
When we adopt the above trivial representation to train the classifier, the learned weights lie in the subspace of the entire action space (i.e., the average of all $\rho(g)$ is a subspace that is invariant to $ T $-actions). To reveal the role of learning the trivial representation, we give the following theorem.

\begin{thm}
	{Given an input sample space $\mathcal{S} = \mathcal{X} \times \mathcal{Y} =  {\left\lbrace(x_{n},y_{n}) \right\rbrace}_{n=1}^{N} \in \mathbb{R}^{d}$, which is structured by a set of orthogonal transformation group G. Then the solution of minimizing the L2 regularized convex loss function: 
		\begin{equation}
			\mathop {\min }\limits_{w,b} \frac{1}{N}\sum\limits_{n=1}^N {l\left( {{{\left\langle {{w^{\top}}{x_n} + b} \right\rangle }_{\mathbb{R}}},{y_{n}}} \right) + \frac{\lambda }{2}} ||w||^{2}
		\end{equation} lies in a vector subspace that is G-invariant, and the general error of the algorithm may be up to a factor $\sqrt{T}$ smaller than the general error of a non-invariant learning algorithm.} 
\end{thm}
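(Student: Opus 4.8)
\emph{Proof proposal.} I would split the statement into its two assertions: (i) the minimizer $w^{\star}$ lies in the $G$-invariant subspace, and (ii) the generalization bound improves by a factor at most $\sqrt{T}$. The first is a Reynolds-averaging (representer-type) argument; the second is a covering-number / algorithmic-robustness comparison.

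\emph{Invariance of the solution.} Write the objective as $\widehat{L}(w,b)=\frac1N\sum_{n}l\!\left(\langle w^{\top}x_{n}+b\rangle_{\mathbb R},y_{n}\right)+\frac{\lambda}{2}\|w\|^{2}$, and for any candidate $w$ define its group average $\bar w=\frac1{|G|}\sum_{g\in G}\rho(g)w$, which is exactly $P_{trivial}w$ with $P_{trivial}$ as in \eqref{idccp:trivial}. First I would show $\widehat L(\bar w,b)\le\widehat L(w,b)$. The regularizer cannot increase, since each $\rho(g)$ is orthogonal and hence $\|\bar w\|\le\frac1{|G|}\sum_{g}\|\rho(g)w\|=\|w\|$ by the triangle inequality. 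For the data term, orthogonality gives $\langle(\rho(g)w)^{\top}x_{n}\rangle=\langle w^{\top}\rho(g^{-1})x_{n}\rangle$, so by convexity of $l$ in its first argument and Jensen's inequality, $l(\langle\bar w^{\top}x_{n}+b\rangle,y_{n})\le\frac1{|G|}\sum_{g}l(\langle w^{\top}\rho(g^{-1})x_{n}+b\rangle,y_{n})$; summing over $n$ and using the hypothesis that $\mathcal S$ is structured by $G$ (so that for each fixed $g$ the map $(x_{n},y_{n})\mapsto(\rho(g^{-1})x_{n},y_{n})$ permutes the sample while fixing labels) collapses the inner average back to the original data term. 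Hence $\widehat L(\bar w,b)\le\widehat L(w,b)$. Because $\lambda>0$ the objective is strongly convex in $w$, so its minimizer $w^{\star}$ is unique; combined with $\widehat L(\overline{w^{\star}},b^{\star})\le\widehat L(w^{\star},b^{\star})$ this forces $\overline{w^{\star}}=w^{\star}$, i.e.\ $w^{\star}$ is a fixed point of $P_{trivial}$ and therefore $w^{\star}\in\mathrm{Im}\,P_{trivial}=V^{G}$, the trivial-isotypic (fully $G$-invariant) subspace. The induced classifier then satisfies $\langle(w^{\star})^{\top}T_{g}x\rangle=\langle(w^{\star})^{\top}x\rangle$ for all $g\in G$, which is the claimed invariance.

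\emph{The $\sqrt{T}$ factor.} Here I would invoke a generalization bound that is monotone in the complexity of the \emph{effective} hypothesis class: either a Rademacher-complexity bound, or — following Sokoli\'c \emph{et al.}\ \cite{sokolic2017generalization} — an algorithmic-robustness bound of the form $\mathrm{GE}\lesssim\sqrt{\mathcal N/N}$, where $\mathcal N$ is a covering number of the part of the input space the classifier must resolve at resolution $\gamma$. By part (i) every hypothesis returned by the algorithm lies in $V^{G}$, so it only needs to separate the quotient $\mathcal X/G$; a fundamental-domain argument bounds $\mathcal N(\mathcal X/G;\gamma)\le\mathcal N(\mathcal X;\gamma)/T$, equivalently restricting $w$ to $V^{G}$ reduces the metric-entropy term from $d$ to $\dim V^{G}=d/T$, where $T$ is the reduction factor supplied by the $G$-action (in the isotypic picture, the number of irreducible components in $\rho\simeq\bigoplus_{t=1}^{T}\lambda_{t}\tau_{t}$ carries this role). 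Substituting the smaller covering number divides the quantity under the square root by $T$, yielding an improvement of at most $\sqrt{T}$ over the non-invariant learner; the phrase ``may be up to'' records that the reduction is by exactly $T$ only when the action is (generically) free on the relevant data submanifold.

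\emph{Main obstacle.} The averaging argument for (i) is routine once strong convexity and orthogonality are in place; the real work is (ii). Making $\sqrt{T}$ rigorous requires a clean bound on the covering number (or Rademacher complexity) of the quotient $\mathcal X/G$ \emph{and} a matching lower bound for the non-invariant class so the comparison is not vacuous, and the naive estimate $\mathcal N(\mathcal X;\gamma)/T$ can degrade near points with nontrivial stabilizer or when the data concentrate on a low-dimensional set in special position with respect to $G$. I would therefore state the generalization claim under a genericity/free-action assumption and lean on the robustness machinery of \cite{sokolic2017generalization,mukuta2019invariant} rather than re-deriving it from scratch.
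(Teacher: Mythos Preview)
Your proposal is sound, and in fact supplies considerably more than the paper does: the paper's own ``proof'' consists entirely of two citations --- \cite{mukuta2019invariant} for the $G$-invariance of the minimizer (argued there via irreducible decomposition in the complex setting) and \cite{sokolic2017generalization} for the $\sqrt{T}$ generalization-error factor via covering numbers --- with no intermediate argument. Your part (ii) is exactly the covering-number/robustness route the paper defers to, so there is nothing to compare there.

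For part (i) you take a slightly different, more elementary path than the cited source: rather than invoking the isotypic decomposition of $\rho$ to locate the minimizer in the trivial component, you give a direct Reynolds-averaging argument (orthogonality $\Rightarrow$ regularizer nonincreasing; $G$-structured sample $+$ Jensen $\Rightarrow$ empirical loss nonincreasing; strong convexity $\Rightarrow$ uniqueness forces $w^{\star}=P_{trivial}w^{\star}$). Both arguments identify the same subspace $V^{G}=\operatorname{Im}P_{trivial}$, but yours has the advantage of making the role of convexity and of the assumption ``$\mathcal S$ is structured by $G$'' explicit, at the cost of needing that permutation hypothesis on the finite sample rather than only on the population distribution. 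Your caveats about free actions and stabilizers in (ii) are well placed and are precisely the regularity conditions under which the $\sqrt{T}$ comparison in \cite{sokolic2017generalization} is sharp.
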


\begin{proof}
	{The proof of G-invariant has been given by \cite{mukuta2019invariant} from the irreducible representation in the complex space, while Sokolic \textit{et al.} \cite{sokolic2017generalization} exploited a covering number to prove the general error of the invariant algorithm.} 
\end{proof} 
For more details, we refer readers to \cite{mukuta2019invariant} and\cite{sokolic2017generalization} and reference herein. This theorem also induces essential properties of the trivial representation. Formally, for all $g \in G$, we can have:   
\begin{equation}
	\rho(g)w=w\Leftrightarrow\rho(g)w \subseteq w \ and\ P_{trivial} w=w
\end{equation} 
The aforementioned theorem proves the G-invariance of augmented space contributes to reducing the general error of the learning algorithm but neglects to handle the massive parameters of the learning algorithm and the high-dimensional feature space. Instead, we deploy the learning algorithm to a shared-weights Siamese-style network and supply an effective compressible tensor representation in the following section.
\subsection{Compressible Covariance Pooling}\label{idccp:sec_stiefel}
\begin{table}[]
	\caption{Tensor product of irreducible representation of the roto-reflection D4 group \cite{mukuta2019invariant}.}
	\centering
	\scalebox{0.9}{
		\setlength{\tabcolsep}{1.2mm}{
			\begin{tabular}{c|c|c|c|c|c}
				\toprule
				Irrep. & $ \rho_{1,1} $  & $ \rho_{1,-1} $  &$ \rho_{-1,1} $   &$ \rho_{-1,-1} $   &$ \rho_{2} $   \\ \midrule
				$ \rho_{1,1} $&$ \rho_{1,1} $   &$ \rho_{1,-1}$   &$ \rho_{-1,1} $  &$ \rho_{-1,-1} $  &$ \rho_{2} $  \\
				$ \rho_{1,-1} $&$ \rho_{1,-1}$   &$ \rho_{1,1} $   &$ \rho_{-1,-1} $  &$ \rho_{-1,1} $  &$ \rho_{2} $  \\
				$ \rho_{-1,1} $&$ \rho_{-1,-1} $   &$ \rho_{-1,1} $   &$ \rho_{1,-1} $  &$ \rho_{1,1} $  &$ \rho_{2} $  \\
				$ \rho_{-1,-1} $&$ \rho_{-1,-1} $   &$ \rho_{-1,1} $   &$ \rho_{1,-1} $  &$ \rho_{1,1} $  &$ \rho_{2} $  \\
				$ \rho_{2} $&$ \rho_{2} $   &$ \rho_{2} $   &$ \rho_{2} $  &$ \rho_{2} $  &$ \rho_{1,1} $$\bigoplus$$ \rho_{1,-1} $$\bigoplus$ $ \rho_{-1,1} $$\bigoplus$$ \rho_{-1,-1} $ \\\bottomrule
	\end{tabular}}}\label{idccp:tensor_d4group}
\end{table}
Covariance pooling, as a form of the second-order statistics feature, aims to establish the correlation between the spatial and channels of local CNN features to aggregate more distinguishing information. Suggested by \cite{sos:li2017second,sos:acharya2018covariance}, and \cite{sos:li2018towards}, we perform the second-order pooling in the form of a scatter covariance matrix:
\begin{equation}\label{idccp:covariance}
	\boldsymbol{\Sigma}=\dfrac{1}{\text{hw}}\sum_{i=1}^{\text{hw}}P\left((\textbf{f}_{i}-\overline{\textbf{f}})(\textbf{f}_{i}-\overline{\textbf{f}})^{\top}\right)=\dfrac{1}{\text{hw}}P_{trivial}\left(\textbf{F} \overline{\textbf{I}}\textbf{F}^{\top}\right).
\end{equation}  
where \textit{w} and \textit{h} are feature width and height. $P_{trivial}$ is projection function that we have introduced before. $\overline{\textbf{f}}=\dfrac{1}{\text{hw}}\sum_{i=1}^{\text{hw}}\textbf{f}_{i}$ is the mean of feature vectors. $ \overline{\textbf{I}}=\textbf{I}-\dfrac{1}{\text{hw}}\textbf{1}\textbf{1}^{\top} \in \mathbb{R}^{\text{hw}\times \text{hw}}$ is the centering matrix, where $ \textbf{I} $ and $ \textbf{1} $ denote the identity matrix and the all-ones matrix, respectively. 

Since the projection function $P_{trivial}$ is employed in the tensor space, the tensor product representation needs to be given concerning the irreducible representation in the D4 group. According to the distributive property of tensor product representation (e.g., given two representations $\rho$ and $\sigma$, it satisfies $(\rho_{1}\oplus\rho_{2})\otimes(\rho_{3}\oplus\rho_{4})=(\rho_{1}\otimes\rho_{3})\oplus(\rho_{2}\otimes\rho_{3})\oplus(\rho_{1}\otimes\rho_{4})\oplus(\rho_{2}\otimes\rho_{4})$ and $\mathcal{X}_{\rho\otimes\sigma}(g)=\mathcal{X}_{\rho}(g)\mathcal{X}_{\sigma}(g)$, we can calculate the
tensor product representations of irreducible representations. Combining the fact that the tensor product of irreducible representation and 1-D representation is irreducible, it allows
us to decompose tensor products of D4 group and present
the results in TABLE \ref{idccp:tensor_d4group}. For verifying the results, we take two representations $\rho(e)$ and $\rho(m)$ in TABLE \ref{idccp:d4_group} as an example, and the corresponding tensor product representations become 4-D vectors such that $\rho(e)=\begin{bmatrix} 1 & 0 & 0& 0\\ 0 & 1& 0& 0 \\ 0 & 0& 1& 0\\ 0& 0& 0& 1\\\end{bmatrix}$ and $\rho(m)= \begin{bmatrix} 1 & 0 & 0& 0\\0&1&0&0\\ 0& 0& -1& 0 \\ 0& 0& 0& -1 \\ \end{bmatrix} $, respectively.

The obtained covariance matrix can be regarded as a form of representation, which is capable of capturing more information than the ordinary first-order statistical feature. However, its shortcomings are also obvious. The first and foremost drawback of such covariance pooling is its high dimensionality. Taking VGG architecture \cite{vgg} as an example, the dimension of the vectorized covariance matrix generated from the last convolution layer will be $ 2^{18} $. Rank deficiency is another weakness of covariance pooling because the number of CNN channels is much larger than the product of feature height and width.

The abovementioned reasons promote us to discover a
compact form of covariance pooling. Considering that the
covariance matrix is an SPD matrix, it is necessary to retain the geometry of the SPD manifold while reducing the matrix dimension. To accomplish this goal, we provide a method based on the following proposition.

\begin{prop}
	{Let $ \boldsymbol{\Sigma} \in \mathbb{R}^{d \times d}$ be the covariance matrix generated from the last convolution layer and $ \textbf{W} \in \mathbb{R}^{d \times \hat{d}} $ be an orthogonal, row full rank matrix with $ \hat{d}<d $. Then, the bilinear form of transformation matrix $ \textbf{W}$ maps $\boldsymbol{\Sigma}$ to a valid SPD matrix $ \boldsymbol{\hat{\Sigma}} \in \mathbb{R}^{\hat{d} \times \hat{d}} $.}
\end{prop}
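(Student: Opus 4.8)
The plan is to check, straight from the definitions, that the compressed matrix $\boldsymbol{\hat{\Sigma}}:=\mathbf{W}^{\top}\boldsymbol{\Sigma}\,\mathbf{W}\in\mathbb{R}^{\hat{d}\times\hat{d}}$ is symmetric and positive definite, using only that $\boldsymbol{\Sigma}$ is SPD and that $\mathbf{W}$ has full rank $\hat{d}$ (a fact already forced by the orthonormality constraint $\mathbf{W}^{\top}\mathbf{W}=\mathbf{I}_{\hat{d}}$ that places $\mathbf{W}$ on the Stiefel manifold). First I would dispatch symmetry in one line: $\boldsymbol{\hat{\Sigma}}^{\top}=(\mathbf{W}^{\top}\boldsymbol{\Sigma}\mathbf{W})^{\top}=\mathbf{W}^{\top}\boldsymbol{\Sigma}^{\top}\mathbf{W}=\mathbf{W}^{\top}\boldsymbol{\Sigma}\mathbf{W}=\boldsymbol{\hat{\Sigma}}$, since the centered scatter matrix $\boldsymbol{\Sigma}$ is symmetric by construction.

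For positive definiteness I would take an arbitrary nonzero $\mathbf{z}\in\mathbb{R}^{\hat{d}}$ and rewrite the quadratic form as $\mathbf{z}^{\top}\boldsymbol{\hat{\Sigma}}\,\mathbf{z}=(\mathbf{W}\mathbf{z})^{\top}\boldsymbol{\Sigma}\,(\mathbf{W}\mathbf{z})$. Since $\mathbf{W}$ has full rank $\hat{d}$ with $\hat{d}<d$ — equivalently $\ker\mathbf{W}=\{\mathbf{0}\}$ — we get $\mathbf{y}:=\mathbf{W}\mathbf{z}\neq\mathbf{0}$, and then positive definiteness of $\boldsymbol{\Sigma}$ gives $\mathbf{y}^{\top}\boldsymbol{\Sigma}\mathbf{y}>0$, i.e. $\mathbf{z}^{\top}\boldsymbol{\hat{\Sigma}}\,\mathbf{z}>0$. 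As $\mathbf{z}$ was arbitrary, $\boldsymbol{\hat{\Sigma}}$ is SPD and hence a valid point of the lower-dimensional manifold $S_{\hat{\Sigma}}(\hat{d},\hat{d})$; in short, a congruence by any injective matrix carries the SPD cone into the SPD cone, which is the whole content of the claim.

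To justify the words ``retain the geometry of the SPD manifold,'' I would append a short remark that actually uses the orthonormality $\mathbf{W}^{\top}\mathbf{W}=\mathbf{I}_{\hat{d}}$ rather than mere full rank: by the Cauchy interlacing (Poincar\'e separation) inequality the spectrum of $\boldsymbol{\hat{\Sigma}}$ interlaces that of $\boldsymbol{\Sigma}$, so $\lambda_{\min}(\boldsymbol{\Sigma})\le\lambda_{i}(\boldsymbol{\hat{\Sigma}})\le\lambda_{\max}(\boldsymbol{\Sigma})$ for every $i$. Thus the compressed covariance inherits a controlled condition number and stays bounded away from the boundary of the cone, which is precisely what is needed for the affine-invariant / log-Euclidean metric to behave well; restricting $\mathbf{W}$ to the compact Stiefel manifold additionally rules out scale collapse during optimization. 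I would make clear this part is a bonus beyond the bare statement.

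The step I expect to require the most care — the likely main obstacle — is the standing assumption that $\boldsymbol{\Sigma}$ is \emph{strictly} positive definite. As the paper itself notes, a covariance/scatter matrix pooled from a convolutional layer is generically rank deficient (channels $\gg hw$), hence only positive \emph{semi}definite, and then $(\mathbf{W}\mathbf{z})^{\top}\boldsymbol{\Sigma}(\mathbf{W}\mathbf{z})$ can vanish for some $\mathbf{z}\neq\mathbf{0}$ whenever $\operatorname{range}(\mathbf{W})\cap\ker(\boldsymbol{\Sigma})\neq\{\mathbf{0}\}$, so literally the conclusion can fail. To keep the proposition true I would either (i) apply the argument to the regularized matrix $\boldsymbol{\Sigma}+\varepsilon\mathbf{I}$ actually used in practice, which is genuinely SPD so the argument above goes through verbatim, or (ii) add the mild generic hypotheses $\hat{d}\le\operatorname{rank}(\boldsymbol{\Sigma})$ and $\operatorname{range}(\mathbf{W})\cap\ker(\boldsymbol{\Sigma})=\{\mathbf{0}\}$ — the latter holding for almost every $\mathbf{W}$ on the Stiefel manifold — under which the quadratic form is again strictly positive. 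I would flag this explicitly rather than leave it implicit, since it is the one place where the ``valid SPD'' conclusion is in doubt.
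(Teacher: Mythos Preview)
Your argument is correct and follows the same route as the paper: define the bilinear map $\boldsymbol{\hat{\Sigma}}=\mathbf{W}^{\top}\boldsymbol{\Sigma}\mathbf{W}$ and invoke the full-rank/orthogonality of $\mathbf{W}$ (i.e., $\mathbf{W}^{\top}\mathbf{W}=\mathbf{I}_{\hat{d}}$, placing $\mathbf{W}$ on the Stiefel manifold) to conclude the image is SPD. The paper's own proof is essentially a one-line assertion---it writes down the bilinear form, notes the Stiefel constraint, and declares the result ``trivial''---whereas you actually supply the two verifications (symmetry by transposition, strict positivity via $\ker\mathbf{W}=\{\mathbf{0}\}$) that make the claim rigorous.

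Your additions go beyond the paper in two useful ways. First, the Cauchy interlacing remark gives real content to the phrase ``retain the geometry of the SPD manifold,'' which the paper states but does not justify; the eigenvalue bounds $\lambda_{\min}(\boldsymbol{\Sigma})\le\lambda_i(\boldsymbol{\hat{\Sigma}})\le\lambda_{\max}(\boldsymbol{\Sigma})$ are exactly what distinguishes the orthonormal case from a mere full-rank congruence. Second, your caveat about rank deficiency is well taken and in fact sharper than the paper: the text acknowledges elsewhere that the pooled covariance is typically rank deficient (channels $\gg hw$), yet the proposition's proof silently assumes strict positive definiteness. Your proposed fixes---regularization $\boldsymbol{\Sigma}+\varepsilon\mathbf{I}$, or the generic transversality condition $\operatorname{range}(\mathbf{W})\cap\ker(\boldsymbol{\Sigma})=\{\mathbf{0}\}$ with $\hat{d}\le\operatorname{rank}(\boldsymbol{\Sigma})$---are the right way to make the statement literally true, and flagging this explicitly is an improvement over the original.
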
  

\begin{proof}
	The bilinear mapping function can be generally denoted as $ \mathcal{B}: \boldsymbol{\Sigma} \times \textbf{W} \to \boldsymbol{\hat{\Sigma}} $. In order to express it more accurately, we can rewrite it in the form of: $ \boldsymbol{\hat{\Sigma}}= \textbf{W}^{\top}\boldsymbol{\Sigma}\textbf{W} $. Due to the orthogonality and row full rank of transformation matrix $ \textbf{W} $, the elements generated by transformation weights are naturally located on a non-compact Stiefel manifold $ \mathcal{S}^{*}(\hat{d},d)\triangleq \left\lbrace  \textbf{W} \in \mathbb{R}^{d \times \hat{d}}: \textbf{W}^{\top}\textbf{W}=\textbf{I}_{\hat{d}}\right\rbrace  $ and can be transformed into a compact manifold $ \mathcal{S}(\hat{d},d) $. Then, the resulting matrix  $ \boldsymbol{\hat{\Sigma}} \in \mathbb{R}^{\hat{d} \times \hat{d}} $ is a valid but very compact SPD matrix because $ \hat{d}<d $.
\end{proof} 

The abovementioned claim and proof are trivial, but it
guides us to convert those high-dimensional SPD matrices $ \boldsymbol{\Sigma} $ {to} new, low-dimensional SPD matrices $ \boldsymbol{\hat{\Sigma}}  $ with $ \hat{d}<d$, $\boldsymbol{\hat{\Sigma}}\in Sym_{\hat{d}}^{+}$. Compared with most existing methods that directly map SPD manifold into the Euclidean space \cite{sos:lin2015bilinear,sos:lin2017improved,sos:kong2017low,sos:gao2016compact,sos:li2017second,sos:li2018towards}, our method can certainly preserve the inherent manifold structure of high-dimensional SPD matrices. However, given a non-compact Stiefel manifold, a matrix form of writing linearly independent column vectors (i.e., $ \hat{d} $-frames), has no closed-form of geodesic curves. In other words, it is infeasible to optimize on the manifold directly \cite{sos:fiori2010learning}. The
relatively tractable strategy is to endow non-compact Stiefel manifold with a pseudo-Riemannian manifold so that the gradient of geodesic distance can be derived from a smooth manifold and present in a closed form. To achieve this target, we impose the orthogonal constraints on $ \textbf{W} $ (precisely speaking, it is semi-orthogonal matrix under this scenario). Consequently, the entities of transformation weight $ \textbf{W} $ reside on a compact Stiefel manifold $ \mathcal{S}(\hat{d},d) $, which allows us to find the optimal solutions of the weight matrix.    

Furthermore, the abovementioned function for feature dimension reduction can also be regarded as an intertwining operator when we impose orthogonal constraints $ \textbf{W}^{\top}\textbf{W}=\textbf{I}_{\hat{d}} $ on transformation weight. Recalling the introduction of
intertwining in preliminaries, the produced projection space is also the representation space. Thus, the low-dimensional representation can be achieved by imposing low-rank constraints on weight $ \textbf{W} $. Specifically, we can first line up the eigenvalues of $ \boldsymbol{\Sigma} $ by employing eigenvalue decomposition function and then find the elements with the larger variance to retain. However, matrix decomposition often requires more computational costs and time-consuming \cite{sos:li2017second,mg_cap}. Rather than using cumbersome decomposition functions, the bilinear mapping function can transform the input SPD matrix into a new, low-dimensional SPD matrix that is useful for subsequent optimization.                 
\subsection{Invariant Classifier Training}
The compressible covariance pooling method has been
described in the last section, which maps the high-dimensional manifold to a low-dimensional compact manifold. Different from the mainstream methods, our algorithm deduces a rank efficient representation on manifold space while retaining the inherent manifold structure.   

The elements of the resulting low-dimensional SPD matrices reside on the Riemannian manifold, which needs to be transformed into the Euclidean space so that the distance between different elements can be measured by the Euclidean operations. The natural choice is to employ the logarithm of SPD matrices since it reflects the true geodesic distance of the manifold. Furthermore, the logarithm of an SPD matrix will give rise to the matrix with a Lie group, and then, all Euclidean operations can be adopted. However, the logarithm will change the magnitude order of small eigenvalues and usually not
robust in practice \cite{sos:li2017second,sos:lin2017improved}. Instead, we are committed to learning more robust square root normalization of matrices, which can be considered as the approximate Riemannian geometry in covariance matrices \cite{sos:li2017second}.  

It is well-known that any SPD matrix has a unique square
root, which can be obtained by using SVD or EIG. Although
SVD or EIG yield the accurate solution of the square root
of a matrix, they are time-consuming and often cannot be
well-supported by GPU acceleration \cite{sos:li2017second,mg_cap}. Inspired by \cite{sos:li2018towards}, we adopt the iSQRT-COV approach that uses a variation of the Newton method to iteratively calculate the square root of the matrix. Especially, given $\textbf{C}_{0}= \frac{\boldsymbol{\hat{\Sigma}}}{\text{tr}(\boldsymbol{\hat{\Sigma}})}$ and $\textbf{D}_{0}=\textbf{I}$, the Newton-Schulz method \cite{sos:li2018towards} allows us to compute the square root $\textbf{C}$ of $\boldsymbol{\hat{\Sigma}}$ by using the following iterations:
\begin{equation}\label{idccp:Newton_Schulz}
	\begin{split}
		&\textbf{C}_{j}=\frac{1}{2}\textbf{C}_{j-1}(3\textbf{I}-\textbf{D}_{j}\textbf{C}_{j-1}),\\
		&\textbf{D}_{j}=\frac{1}{2}(3\textbf{I}-\textbf{D}_{j}\textbf{C}_{j-1})\textbf{D}_{j},
	\end{split}
\end{equation}
where $j=1,\dots,J$ is the iteration steps. With the condition of $||\boldsymbol{\hat{\Sigma}}-\textbf{I}||<\frac{1}{2}$, $\textbf{C}_{j}$ and $\textbf{D}_{j}$ are guaranteed to quadratically converge to $\textbf{C}^{\frac{1}{2}}$ and $\textbf{C}^{-\frac{1}{2}}$, respectively. Briefly, it means that $\textbf{C}^{2}=\boldsymbol{\hat{\Sigma}}$ and $\textbf{C}=\boldsymbol{\Psi}\boldsymbol{\Lambda}\boldsymbol{\Psi}^{\top}$ described in EIG format, where $\textbf{C}=\boldsymbol{\Psi}$ is an orthogonal matrix and $\boldsymbol{\Lambda}=(\boldsymbol{\lambda}_{1}^{\frac{1}{2}},\boldsymbol{\lambda}_{2}^{\frac{1}{2}},\dots,\boldsymbol{\lambda}_{d^{'}}^{\frac{1}{2}})$ is a diagonal matrix.        

Once the square-root of the SPD matrix is obtained, we can
adopt the Euclidean operations to measure the distance of
elements on the flatted Stiefel Manifold. Considering the fact that the initialization of $\textbf{C}_{0}$ has changed the magnitude of the matrix value, we then use $\hat{\textbf{C}}=\sqrt{\text{tr}(\boldsymbol{\hat{\Sigma}})}\textbf{C}_{j}$ to counteract such changes \cite{sos:li2018towards}. The resulting matrix $\hat{\textbf{C}}$ can be used to train the classifier. Let us suppose that $ \hat{\textbf{W}} $ be the corresponding weight matrix of $\hat{\textbf{C}}$. The objective function in $ \textbf{Theorem 1.} $ can be rewritten by substituting \textit{w} with $ \hat{\textbf{W}} $, and then yield the following expression:
\begin{equation}\label{idccp:obj_func}
	\begin{split}
		&\mathop {\min }\limits_{\hat{\textbf{W}},b} \frac{1}{N}\sum\limits_{i=1}^N l \left( {{{\left\langle {\text{tr}\left( {\hat{\textbf{W}}^{\top}\hat{\textbf{C}}} \right) + b} \right\rangle }_{\mathbb{R}}},{y_{i}}} \right) + \frac{\lambda }{2}||\hat{\textbf{W}}||^{2}\\
		=&\mathop {\min }\limits_{\hat{\textbf{W}},b} \frac{1}{N}\sum\limits_{i=1}^N l \left( {{{\left\langle {\text{tr}\left(\hat{\textbf{W}}^{\textbf{(1)}}\hat{\textbf{C}}^{\textbf{(1)}}\right)+ b} \right\rangle }_{\mathbb{R}}}},{y_{i}} \right)+ \frac{\lambda }{2}||\hat{\textbf{W}}||^{2}.\\
	\end{split}
\end{equation}
For brevity, we omit the transpose operator $\top$ in the last line since $\hat{\textbf{W}}=\hat{\textbf{W}}^{\top}$. The final result highlights the key advantage of our classifier, which avoids the direct optimization on the original high-dimensional weights $\textbf{W}$.
\subsection{Back-propagation} 
Stochastic gradient descent (SGD), as one of the most
popular gradient calculation algorithms, is widely adopted
for training deep CNNs. In our scenario, we employ SGD
to compute the gradient of the given objective function with respect to the transformation matrix $\textbf{W}$ and the second-order statistical feature $\boldsymbol{\Sigma}$. Let us write the derivative of $\hat{\textbf{C}}$ as $\left( {\frac{{\partial l}}{{\partial \hat{\textbf{C}}}}} \right) $ that derives from the Softmax layer. Then, we can use the following chain rules to calculate the matrix derivatives:  
\begin{equation}\label{idccp:chain_rule}
	\begin{split}
		&\text{tr}\left( {{{\left( {\frac{{\partial l}}{{\partial \hat{\textbf{C}}}}} \right)}^{\top}}d\hat{\textbf{C}}} \right) = \text{tr}\left( {{{\left( {\frac{{\partial l}}{{\partial {\hat{\textbf{C}}_{J}}}}} \right)}^{\top}}d{\hat{\textbf{C}}_{J}} + {{\left( {\frac{{\partial l}}{{\partial \boldsymbol{\hat{\Sigma}} }}} \right)}^{\top}}d\boldsymbol{\hat{\Sigma}} } \right),\\
		&\text{tr}\left( {{{\left( {\frac{{\partial l}}{{\partial \textbf{W}}}} \right)}^{\top}}d\boldsymbol{\Sigma}} \right) = \text{tr}\left( {{{\left( {\frac{{\partial l}}{{\partial \boldsymbol{\hat{\Sigma}}}}} \right)}^{\top}}d{\boldsymbol{\hat{\Sigma}}} + {{\left( {\frac{{\partial l}}{{\partial \textbf{W}}}} \right)}^{\top}}d\boldsymbol{\Sigma} } \right),\\
	\end{split}
\end{equation}
where $d\hat{\textbf{C}}$ is the variation of $\hat{\textbf{C}}$. According to expression at the first line, we can derive the derivative of $\boldsymbol{\hat{\Sigma}}$ through some manipulations. For more details, we refer readers to \cite{sos:li2018towards} and reference it herein. Once we obtained $\frac{{\partial l}}{{\partial \boldsymbol{\hat{\Sigma}}}}$, we can exploit it to compute the gradient for updating $\textbf{W}$.    

As described in \ref{idccp:sec_stiefel}, we project all elements on the Stiefel manifold $ \mathcal{S}(\hat{d},d) $ into the Euclidean space so that we can use the Euclidean operations to measure the distance between projected elements. However, directly using the back propagation rules in the Euclidean space to calculate the gradient of the Stiefel manifold cannot guarantee that the orthogonality of weights $\textbf{W}$. To this end, we introduce the Euclidean inner product in the tangent space of the Stiefel manifold as a new strategy for updating the gradient of our covariance pooling. Therefore, the Stiefel manifold is transformed into a Riemannian manifold so that we can borrow the method of optimizing the Riemannian manifold to calculate the gradient of the Stiefel manifold. When employing the Euclidean inner product, the corresponding gradient of the current points $\textbf{W}^{t}$ on the Riemannian manifold $G_{e}^{\mathcal{S}}l(\textbf{W}^{t})$ can be obtained by:
\begin{equation}\label{idccp:riemannian_grad}
	G_{e}^{\mathcal{S}}l(\textbf{W}^{t}) = \frac{{\partial l}}{{\partial \textbf{W}^{t}}}-\textbf{W}^{t}\left(\frac{{\partial l}}{{\partial \textbf{W}^{t}}}\right)^{\top}\textbf{W}^{t},
\end{equation}
where $\frac{{\partial l}}{{\partial \textbf{W}^{t}}}$ is the normal component of the gradient in the Euclidean space, which can be obtained by using the second expression of Eq.(\ref{idccp:chain_rule}) as:
\begin{equation}\label{euclidean_grad}
	\frac{{\partial l}}{{\partial \textbf{W}^{t}}}=2{\frac{{\partial l}}{{\partial \boldsymbol{\hat{\Sigma}}}}}\textbf{W}^{t}\boldsymbol{\Sigma},
\end{equation}
When we obtain the Riemannian gradient, we need to seek the descent direction of the gradient (i.e., we use steepest gradient descent in our scenario) and ensure that the new update points $\textbf{W}^{t+1}$ are located on the Stiefel manifold. To achieve this, we adopt the QR-decomposition retraction $ Z_{\textbf{W}}(\xi)= qf(\textbf{W}+\xi)$ introduced in \cite{sos:huang2017riemannian,edelman1998geometry,absil2009optimization}. Here, $qf(\cdot)$ is the adjusted Q factors of the QR-decomposition and R factors in an upper triangular matrix with strictly positive elements on the diagonal. Thus, the decomposition is guaranteed to be unique and orthogonal. Through defining the learning rate as $\eta$, we can compute the new point by:
\begin{equation}
	\textbf{W}_{t+1}=qf\left(\textbf{W}_{t}-\eta G_{e}^{\mathcal{S}}l(\textbf{W}^{t}) \right),
\end{equation}
Once the derivation of $\frac{{\partial l}}{{\partial \boldsymbol{\Sigma}}}$ has been achieved, we can derive the derivative for the input feature $\textbf{F}$ with using:
\begin{equation}\label{feat_derivation}
	\frac{{\partial l}}{{\partial \textbf{F}}}=\left( \frac{{\partial l}}{{\partial \boldsymbol{\Sigma}}}\left( \frac{{\partial l}}{{\partial \boldsymbol{\Sigma}}}\right)^{\top}\right)\hat{\textbf{I}}\textbf{F}.   
\end{equation}
\section{Experiments}
\subsection{Datasets Description}
To evaluate the effectiveness of the proposed method,
we carried out comprehensive experiments on four publicly available aerial image datasets. Experimental data sets
include UC Merced Land-Use dataset \cite{dataset:yang2010bag}, Aerial Image dataset \cite{dataset:xia2017aid}, NWPU-RESISC45 dataset \cite{dataset:cheng2017remote}, and a recently released OPTIMAL-31 datasets \cite{dataset:wang2018scene}. The statistics of datasets have been summarized in TABLE \ref{idccp:datasets_sum}. In addition to the information listed in TABLE \ref{idccp:datasets_sum}, the spatial resolution of aerial images is another important factor that makes aerial images differ from ordinary scene images. Especially, the spatial resolution of the UC Merced Land-Use dataset \cite{dataset:yang2010bag} is about 0.3m/pixel, while it becomes 0.5-8m/pixel on AID \cite{dataset:xia2017aid}, and it is even more diversified on NWPU-RESISC45 dataset  \cite{dataset:cheng2017remote}, which is about 0.2-30m/pixel. Through comparing with the information described in Section \ref{introduction}, we can see that most of the data sets used to evaluate our model can satisfy the definition of high spatial resolution images. 
\begin{table}[]
	\caption{The statistics of remote sensing scene datasets.}
	\scalebox{0.88}{
		\begin{tabular}{c|c|c|c|c}
			\toprule
			Datasets      & \ No. Images & \ No. Class & \begin{tabular}[]{@{}c@{}}No. Images\\ (Per-class)\end{tabular}\    & \ Image Size \\ \midrule
			\begin{tabular}[]{@{}c@{}}UC Merced\\ Land-Use\end{tabular}  \cite{dataset:yang2010bag}     & 2,100         & 21           & 100            		& 256$\times$256		           \\
			AID \cite{dataset:xia2017aid}           & 10,000        & 30           & 220$\sim$420   		& 600$\times$600		              \\
			NWPU-RESISC45 \cite{dataset:cheng2017remote} & 31,500        & 45           & 700          		& 256$\times$256		                \\
			OPTIMAL-31 \cite{dataset:wang2018scene}     & 1,861         & 31           & 60            		& 256$\times$256		           \\ \bottomrule   
	\end{tabular}}\label{idccp:datasets_sum}
\end{table}
\subsection{Implement Details}
\begin{table*}[]
	\centering
	\caption{Comparison with the Bilinear pooling method \cite{sos:lin2015bilinear} in terms of feature dimensionality, computational complexity and the number of parameters ( ResNet50 \cite{resnet}-based Siamese-style architecture ). Where $d_{p}=512$ and $K$ denote the projection layer and the number of categories, respectively. (w/ and w/o indicate with projection layer and without projection layer, respectively.)}
	
	\begin{tabular}{c|c|cc|cc|c}\toprule
		Methods      &Feature Dim. & Feature Comp. & Classifier Comp. & Feature Param. &Classifier Param.&Model Param.\\ \midrule
		Bilinear pooling \cite{sos:lin2015bilinear} (w/o $ d_{p} $)    & $d^{2}$\ [4,194K]         & $O(hwd^{2})$           & $O(Kd^{2})$              & 0             & $Kd^{2}$\ [$K\cdot$16MB]        &[118MB]\\ \midrule
		Our IDCCP-512 (w/ $ d_{p} $)          & $\hat{d}_{1}^{2}$\ [256K]        & $O(hwd_{p}d+hw\hat{d}_{1}^{2})$           & $O(Kd_{1}^{2})$        &$dd_{p}$\ [4MB] &$Kd_{1}^{2}$\ [$K\cdot$1MB]              &[30MB]\\
		Our IDCCP-64 (w/ $ d_{p} $)           & $\hat{d}_{2}^{2}$\ [4K]        & $O(hwd_{p}d+hw\hat{d}_{2}^{2})$           & $O(Kd_{2}^{2})$          &$dd_{p}$\ [4MB]  &$Kd_{2}^{2}$\ [$K\cdot$16KB]             &[25MB]\\
		\bottomrule
	\end{tabular}\label{idccp:compare_complexity}
\end{table*}
\begin{table*}[]
	\centering
	\caption{Comparison with state-of-the-art approaches in terms of overall accuracy and standard deviation (\%). T.R. is the abbreviation of Training Ratio.}
	\begin{tabular}{cc|cc|cc|c|c}
		\toprule
		\multirow{2}{*}{}                                            & \multirow{2}{*}{Methods}  &  \multicolumn{2}{c}{NWPU-RESISC45 \cite{dataset:cheng2017remote}}         & \multicolumn{2}{c}{AID \cite{dataset:xia2017aid}}                   & {UC-Merced Land-Use \cite{dataset:yang2010bag}}     & \multirow{2}{*}{Publication Year}\\\cline{3-7} 
		&                         & T.R.=10\% & T.R.=20\% & T.R.=20\% & T.R.=50\%& T.R.=80\% &\\ \midrule
		
		& AlexNet+SVM \cite{dcnn}             & 81.22$ \pm $0.19          & 85.16$ \pm $0.18          & 84.23$ \pm $0.10          & 93.51$ \pm $0.10                      & 94.42$ \pm $0.10          &2018\\
		& GoogLeNet+SVM \cite{dcnn}           & 82.57$ \pm $0.12          & 86.02$ \pm $0.18          & 87.51$ \pm $0.11          & 95.27$ \pm $0.10                       & 96.82$ \pm $0.20          &2018\\
		& VGGNet+SVM \cite{dcnn}               & 87.15$ \pm $0.45          & 90.36$ \pm $0.18          & 89.33$ \pm $0.23          & 96.04$ \pm $0.13                       & 97.14$ \pm $0.10          &2018\\
		
		& MSCP with AlexNet \cite{mgcap:he2018remote}         & 81.70$ \pm $0.23          & 85.58$ \pm $0.16          & 88.99$ \pm $0.38          & 92.36$ \pm $0.21                        & 97.29$ \pm $0.63          &2018\\
		& MSCP+MRA with AlexNet \cite{mgcap:he2018remote}     & 88.31$ \pm $0.23          & 87.05$ \pm $0.23          & 90.65$ \pm $0.19          & 94.11$ \pm $0.15                       & 97.32$ \pm $0.52          &2018\\
		
		& MSCP with VGGNet \cite{mgcap:he2018remote}         & 85.33$ \pm $0.17          & 88.93$ \pm $0.14          & 91.52$ \pm $0.21          & 94.42$ \pm $0.17                        & 98.36$ \pm $0.58          &2018\\
		& MSCP+MRA with VGGNet \cite{mgcap:he2018remote}     & 88.07$ \pm $0.18          & 90.81$ \pm $0.13          & 92.21$ \pm $0.17          & 96.56$ \pm $0.18                       & 98.40$ \pm $0.34          &2018\\
		& DCNN with AlexNet \cite{dcnn}     & 85.56$ \pm $0.20          & 87.24$ \pm $0.12          & 85.62$ \pm $0.10          & 94.47$ \pm $0.10                          & 96.67+0.10          &2018\\
		& DCNN with GoogLeNet \cite{dcnn}     & 86.89$ \pm $0.10          & 90.49$ \pm $0.15          & 88.79$ \pm $0.10          & 96.22$ \pm $0.10                        & 97.07+0.12          &2018\\
		& DCNN with VGGNet \cite{dcnn}         & 89.22$ \pm $0.50          & 91.89$ \pm $0.22          & 90.82$ \pm $0.16          & 96.89$ \pm $0.10                       & 98.93$ \pm $0.10                      &2018\\ 
		& RTN with VGGNet \cite{rtn}         & 89.90          & 92.71          & 92.44          & -                      & 98.96                      &2018\\
		& Two-Stream Fusion \cite{lgfusion:yu2018two}         & 80.22$ \pm $0.22          & 83.16$ \pm $0.18          & 92.32$ \pm $0.41          & 94.58$ \pm $0.25                       & 98.02$ \pm $1.03                      &2018\\
		& GCFs+LOFs \cite{lgfusion:zeng2018improving}         & -          &-           &92.48$ \pm $0.38           &96.85$ \pm $0.23                        &99.00$ \pm $0.35 &2018\\
		& CapsNet with VGGNet \cite{capsnet}         
		& 85.08$ \pm $0.13                      &89.18 $\pm$ 0.14           
		&91.63$ \pm $0.19                       &94.74$ \pm $0.17                        
		& 98.81$ \pm $0.22                      &2019\\
		& MG-CAP with Bilinear VGGNet \cite{mg_cap}         
		& 89.42$ \pm $0.19                      &91.72 $\pm$ 0.16           
		&92.11$ \pm $0.15                       &95.14$ \pm $0.12                        &98.60$ \pm $0.26 &2020\\
		& MG-CAP with Log-E VGGNet \cite{mg_cap}         
		& 88.35$ \pm $0.23                      &90.94 $\pm$ 0.20           
		&90.17$ \pm $0.19                       &94.85$ \pm $0.16                        &98.45$ \pm $0.12 &2020\\
		& MG-CAP with Sqrt-E VGGNet \cite{mg_cap}         
		& 90.83$ \pm $0.12                      &92.95 $\pm$ 0.13           
		&93.34$ \pm $0.18                       &96.12$ \pm $0.12                        &99.00$ \pm $0.10 &2020\\
		\midrule
		&IDCCP with VGGNet-512(ours)                                                     &90.88$\pm$0.18                     &92.80$\pm$0.10                     &93.58$\pm$0.24                     &96.33$\pm$0.12                       &98.45$\pm$0.12                     &-\\
		&IDCCP with VGGNet-64(ours)                                                     &89.61$\pm$0.19                    &91.75$\pm$0.18                     &92.33$\pm$0.25                     &94.82$\pm$0.22                       &97.35$\pm$0.20                     &-\\
		&\bf IDCCP with ResNet50-512(ours)                                                   &\textbf{91.55}$\pm$0.16               &\textbf{93.76}$\pm$0.12              &\textbf{94.80}$\pm$0.18               &\textbf{96.95}$\pm$0.13             &\textbf{99.05}$\pm$0.20       &-\\
		&IDCCP with ResNet50-64(ours)                                                     &91.31$\pm$0.22                     &93.66$\pm$0.21                     &94.64$\pm$0.23                     &96.73$\pm$0.18                          &98.57$\pm$0.24&-\\
		\bottomrule  
	\end{tabular}\label{idccp:overall_std}
\end{table*}
\begin{table}[]
	\centering
	\caption{Comparison with state-of-the-art methods in terms of overall accuracy and standard deviation (\%).}
	\begin{tabular}{c|c}
		\toprule
		\multirow{2}{*}{Method} & OPTIMAL-31 \cite{dataset:wang2018scene} \\\cline{2-2}
		& Training Ratio=80\% \\ \midrule
		Fine-tuning AlexNet \cite{dataset:wang2018scene}            & 81.22 $\pm$ 0.19        \\
		Fine-tuning GoogLeNet \cite{dataset:wang2018scene}          & 82.57 $\pm$ 0.12        \\
		Fine-tuning VGGNet16 \cite{dataset:wang2018scene}           & 87.45 $\pm$ 0.45        \\
		ARCNet-Alexnet \cite{dataset:wang2018scene}                 & 85.75 $\pm$ 0.35        \\
		ARCNet-ResNet34 \cite{dataset:wang2018scene}                & 91.28 $\pm$ 0.45        \\
		ARCNet-VGGNet16 \cite{dataset:wang2018scene}                & 92.70 $\pm$ 0.35        \\\midrule
		IDCCP with VGG-512 (ours)      & 93.82$\pm$0.32          \\
		IDCCP with VGG-64 (ours)      & 92.13$\pm$0.38          \\
		\textbf{IDCCP with ResNet50-512 (ours)} & \textbf{94.89}$\pm$0.22         \\
		IDCCP with ResNet50-64 (ours)      & 94.54$\pm$0.28          \\\bottomrule
	\end{tabular}\label{idccp:opt_acc}
\end{table}
We implement our method using the GPU version of Tensorflow in v1.10.0. We construct two Siamese-style architectures based on two standard CNNs, including VGGNet \cite{vgg} and ResNet50 \cite{resnet}. We remove all fully-connected layers from the original backbone networks and then insert our projection layer and compressible covariance pooling layer at the same place to train the invariant classifier. The batch size is set to 32 during training. We employ SGD with a momentum of 0.9 and a weight decay of 0.0005 to optimize the gradient. The initial learning rate is set to 0.1 and becomes 0.01 when fine-tuning the entire network. We employ exponential decay in the training process, with a decay factor of 0.9 in every 10 epochs. We adopt five-fold cross-validation to reduce the influence of the randomness and obtain reliable results. When we train our model on the UC Merced Land-Use dataset \cite{dataset:yang2010bag}, the NWPU-RESISC45 dataset \cite{dataset:cheng2017remote}, and the OPTIMAL-31 dataset \cite{dataset:wang2018scene}, we randomly crop patches of 224 $\times$ 224 pixels from the input image and flip them horizontally or vertically. During the test, the manipulation of central cropping is adopted to
obtain patches of the same size as in training. These operations are also applied to AID \cite{dataset:xia2017aid}, but the size of patches becomes 448 $\times$ 448 pixels.
\subsection{Analysis of Model Complexity}
In view of the success of bilinear pooling \cite{sos:lin2015bilinear} and its relevance to our method, we compared the differences of two
models in various aspects and listed the results in TABLE \ref{idccp:compare_complexity}. Especially, the aspects of comparison include input feature dimension, complexity and corresponding parameter size, classifier complexity and its parameter size, and overall model parameters. In order to show the function of the projection
layer, all results are obtained by employing the Siamese-style architecture based on ResNet50 \cite{resnet}. With using the projection layer, the feature dimension can be reduced to the same par with the last convolution layer in VGGNet \cite{vgg}. As shown in TABLE \ref{idccp:compare_complexity}, our invariant deep compressible covariance pooling (IDCCP) model requires an additional 4-MB feature parameter compared to the bilinear pooling model \cite{sos:lin2015bilinear}. However, this operation is more conducive to reducing the
feature dimension and, thus, greatly reducing the number of classifier parameters. Namely, our IDCCP model not only
learns compressible feature representations but also enables us to train more compact classifiers.
\subsection{Comparison with State-of-the-Arts}
We provide four variants of the IDCCP model and display
the classification results in TABLE \ref{idccp:overall_std}.  It is plain to see that
our IDCCP models achieved extremely competitive results on
all experimental datasets. In particular, the performance of the IDCCP model based on ResNet50 \cite{resnet} is superior to the latest MG-CAP model \cite{mg_cap} on all datasets and even far exceeds baseline methods (e.g., our method is improved by about 10\% compared with the standard method of AlexNet + SVM on the challenging NWPU-RESISC45 dataset \cite{dataset:cheng2017remote}). When using VGGNet \cite{vgg}, the MG-CAP model \cite{mg_cap} shows strong competitiveness in classification accuracy, but even if GPU acceleration is enabled, it requires 4.5 times the number of transformations and nearly seven times in terms of inference time. When we employ ResNet50-based Siamese-style architecture, the proposed IDCCP models can obtain accuracy rates higher than 91\% and 93\% under two split ratios on the NWPU-RESISC45 dataset \cite{dataset:cheng2017remote}, respectively. On AID \cite{dataset:xia2017aid}, we can obtain 94.80$\pm$0.18 with using 20\% training samples, which exceeds the best results of MG-CAP model \cite{mg_cap} and DCNN model \cite{dcnn} by 1.46\% and 3.98\%, respectively. Under the 50\% training ratio, the GCFs+LOFs model \cite{lgfusion:zeng2018improving} presents surprisingly better than most existing methods but still below the optimal level of our IDCCP framework. Similarly, on UC Merced Land-Use dataset \cite{dataset:yang2010bag}, our IDCCP achieves the highest accuracy among the listed algorithms. In addition, we show the comparisons of our IDCCP model with previous methods on the OPTIMAL-31 dataset \cite{dataset:wang2018scene}. As shown in TABLE \ref{idccp:opt_acc}, three variants of our proposed method achieve higher results than the latest algorithm. Even our worst IDCCP model can still exceed the result of fine-tuning AlexNet by more than
10\%. By using ResNet50 architecture, our IDCCP model can improve the optimal performance of ARCNet-VGGNet16 by 1.84\%. These indicate that the classification performance can be improved by incorporating the prior knowledge of the input image. 

Generalization ability is vitally important for measuring the effectiveness of deep learning models. By analyzing the data listed in Table \ref{idccp:overall_std}, it is not difficult to see that the variants of our IDCCP model can always bring relatively stable benefits to different data sets. Concretely, using different proportions of training data on the NWPU-RESISC45 \cite{dataset:cheng2017remote} (i.e., 10\% versus 20\% training ratios), the difference between our IDCCP model is about 2\%, but this gap is significantly enlarged on other models (e.g., about 4\% by CapsNet with VGGNet \cite{capsnet} and about 3\% by MSCP with AlexNet or VGGNet \cite{mgcap:he2018remote}). A similar degree of gain is also reflected in the AID \cite{dataset:xia2017aid} with different partitions. However, most of the existing methods are not stable enough under different partitions, including DCNN \cite{dcnn} (about 6\%-9\%), GCFs+LOFs \cite{lgfusion:zeng2018improving} (about 4\%), and SVM-based methods \cite{dcnn} (about 7\%-9\%).  It is worth noting that the actual number of samples corresponding to different training ratios on two different datasets (10\% and 20\% on NWPU-RESISC45 \cite{dataset:cheng2017remote} versus 20\% and 50\% on AID \cite{dataset:xia2017aid}) is in the same order of magnitude (3,150 on NWPU-RESISC45 \cite{dataset:cheng2017remote} versus 3,000 on AID \cite{dataset:xia2017aid}). Therefore, similar
gains in different scenarios also reflect that the robustness of our IDCCP model.

Through comparing the variants of our IDCCP model,
we found that the IDCCP model based on VGGNet \cite{vgg}  has achieved very competitive results on all experimental datasets. Especially, using VGGNet \cite{vgg}, our IDCCP model can obtain comparable results to the similar methods, such as RTN \cite{rtn} and MG-CAP \cite{mg_cap}, and even significantly better than MSCP \cite{mgcap:he2018remote}. The full-rank IDCCP model (i.e., VGGNet-512) obtained a classification accuracy rate of about 10\% higher than the two-stream fusion model \cite{lgfusion:yu2018two} on the NWPU-RESISC45 dataset \cite{dataset:cheng2017remote}. Furthermore, at the expense of the accuracy of the tolerable range (i.e., about {1\%-2\%}), we can compress the model parameters to 1/64 of the original second-order features. The performance gap between IDCCP models based on full-rank and low-rank is rarely small, and some of them are even only 0.1\%. For example, using ResNet50 to train our model on the NWPU-RESISC45 dataset \cite{dataset:cheng2017remote} (under 20\% training ratio) can achieve 93.76\% and 93.66\% accuracy. Apart from the advanced structure of ResNet50, our IDCCP model also benefits from the orthogonal feature reduction layer and the projection layer (i.e., 1 $\times$ 1 convolution layer), which can effectively remove redundant feature information.
\begin{figure*}
	\centering
	\subfloat[NWPU-RESISC45 \cite{dataset:cheng2017remote} (10\% training ratio)]{\includegraphics[width=3.5in]{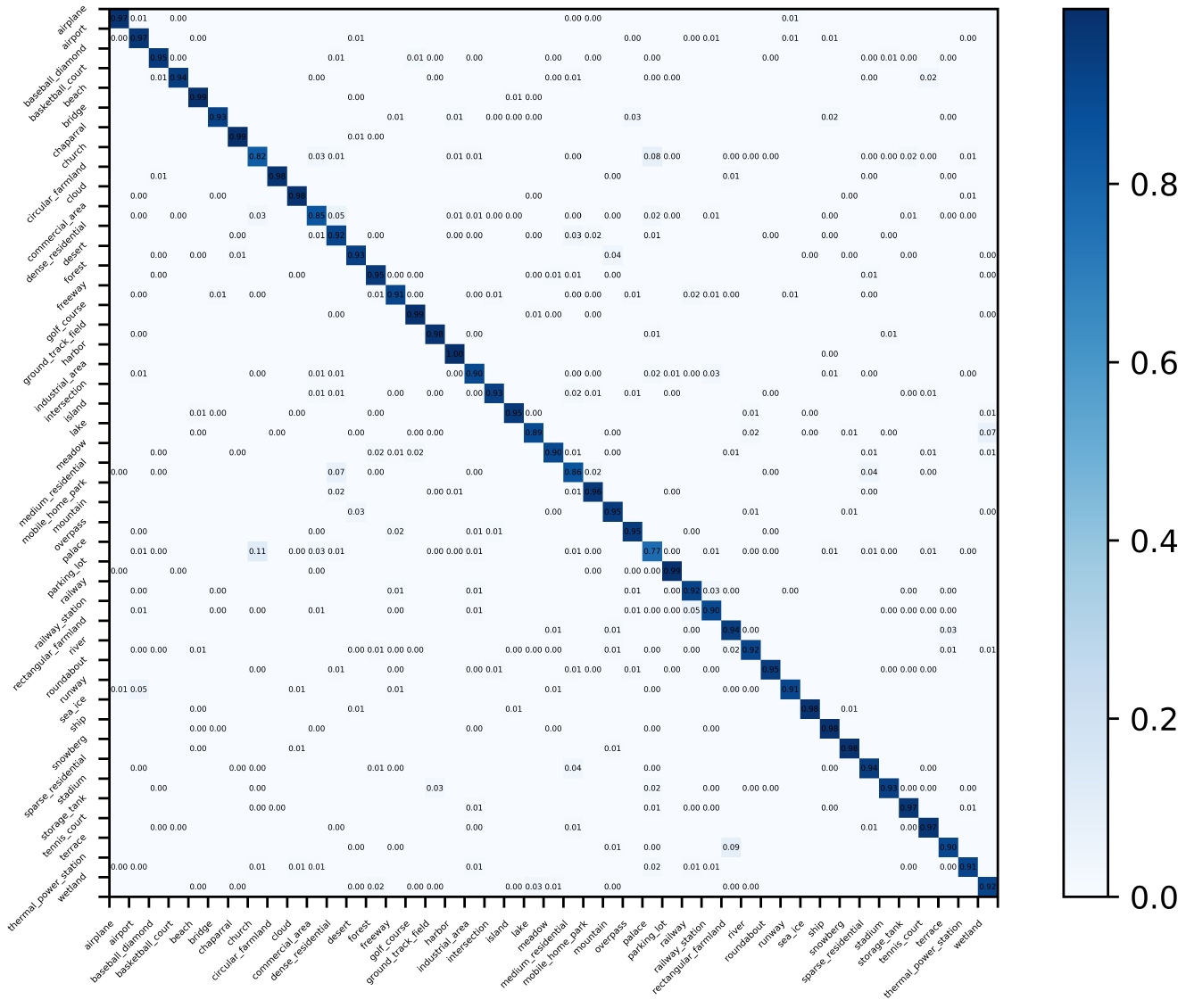}}\label{idccp:nwpu_cm} 
	\subfloat[AID \cite{dataset:xia2017aid} (20\% training ratio)]{\includegraphics[width=3.5in]{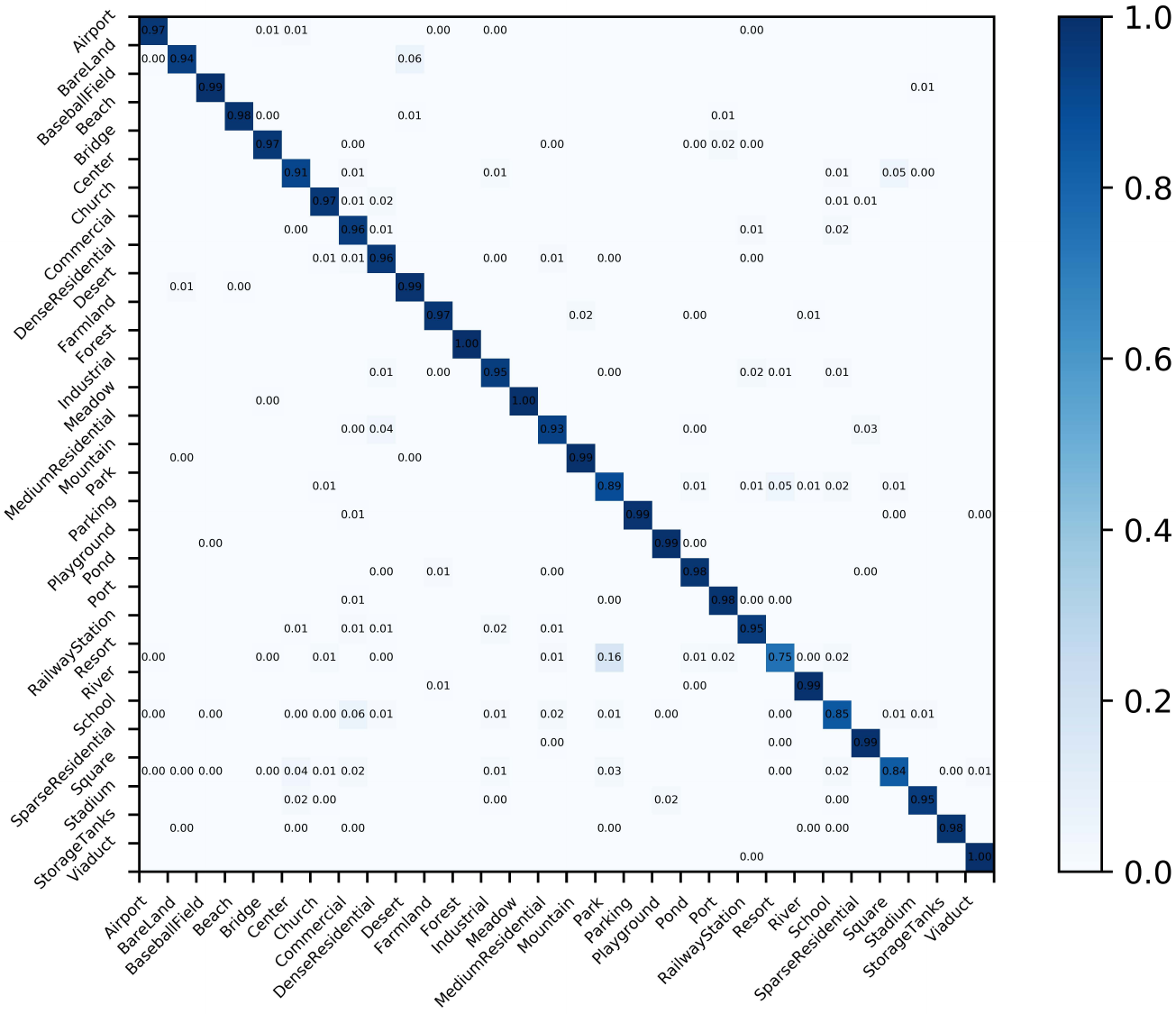}}\label{idccp:aid_cm}\\
	\subfloat[UC Merced \cite{dataset:yang2010bag} (50\% training ratio)]{\includegraphics[width=3.5in]{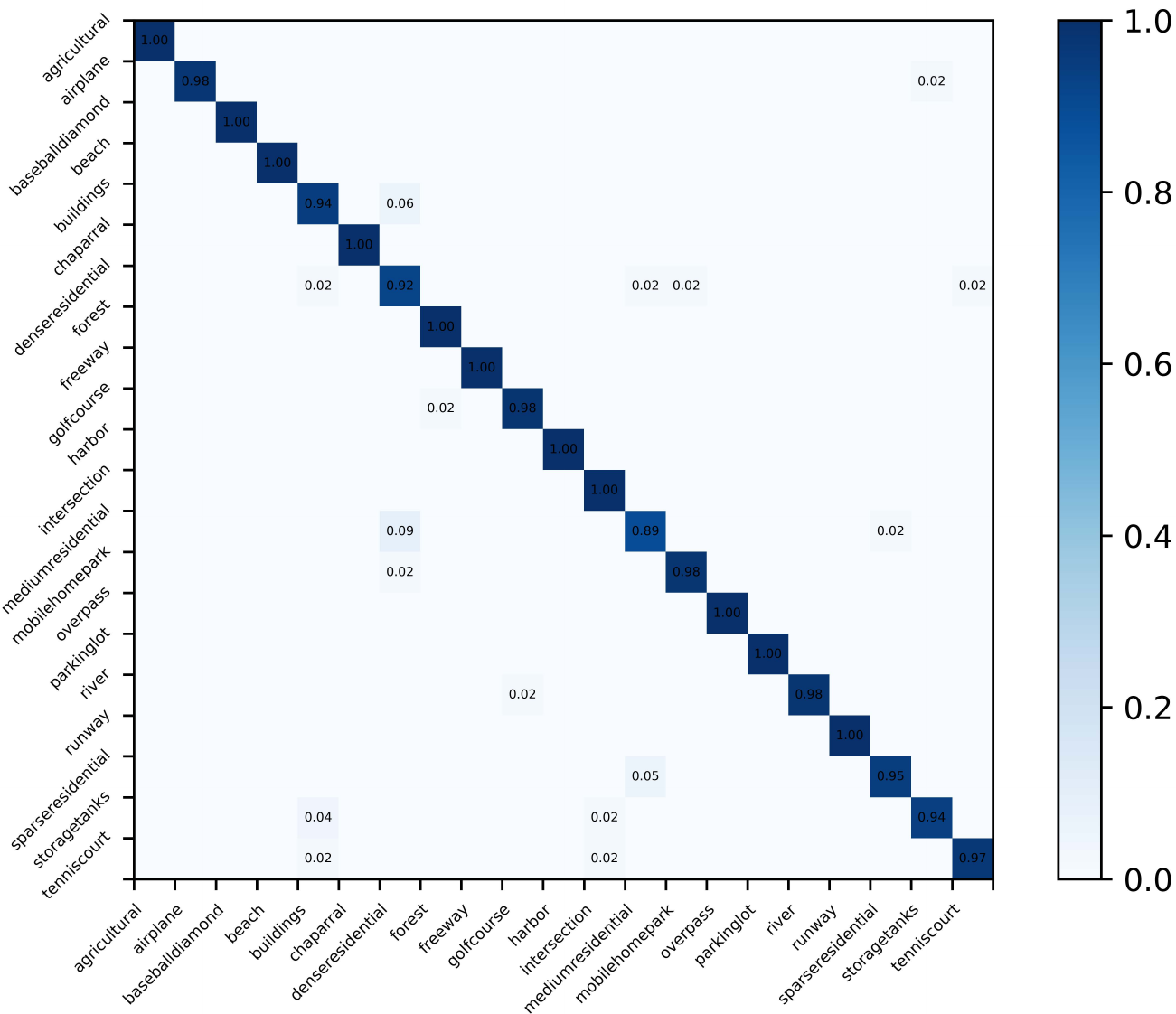}}\label{idccp:uc_merced_cm}
	\subfloat[OPTIMAL-31 \cite{dataset:wang2018scene} (80\% training ratio)]{\includegraphics[width=3.5in]{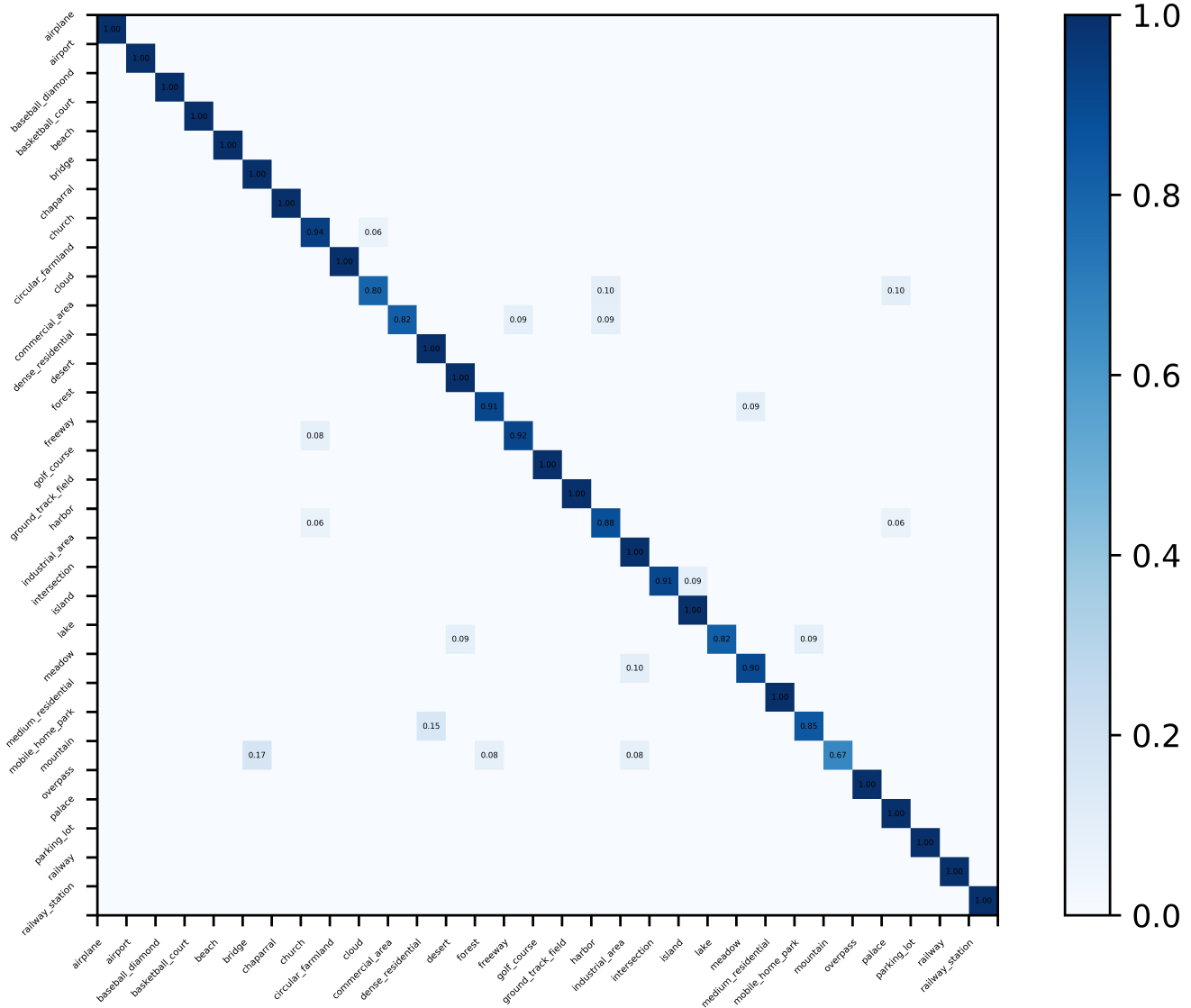}}\label{idccp:optimal31_cm}
	\caption{Results of the confusion matrix on different datasets achieved by our IDCCP model based on ResNet50-512 architecture (not cherry-picking).} 
	\label{idccp:cms} 
\end{figure*}

In addition to comparing overall accuracy, we also provide examples of confusion matrices to show category-level details. For saving space, we randomly selected the results of relatively difficult scenarios on each dataset and displayed in Fig. \ref{idccp:cms}. It can be clearly seen that the darkest color blocks appear on the diagonals of all confusion matrices. On NWPU-RESISC45 \ref{idccp:nwpu_cm}, there exists 40 categories among all 45 categories obtain a classification accuracy rate higher than 90\%. Especially, it reports that the classification accuracy of the \textbf{Palace} and \textbf{Church} categories is 82\% and 77\%, respectively. These two categories are acknowledged as the most visually similar categories in the NWPU-RESISC45 dataset \cite{dataset:cheng2017remote}. Our model can significantly improve the classification accuracy when comparing with the benchmark methods described in {\cite{dataset:cheng2017remote}} (i.e., with improvements of 24\% and 25\%, 7\% and 11\% compared with VGGNet \cite{vgg} and Fine-tuned VGGNet \cite{vgg}, respectively.). Besides, the accuracy that we obtained in these two categories is higher than the two latest technologies by 3\% and 4\% \cite{dcnn}, 2\% and 1\% \cite{rtn}, respectively. On AID \ref{idccp:aid_cm}, the reported accuracy of \textbf{School} is 85\%, which exceeds the algorithm introduced in \cite{dataset:xia2017aid} by 18\%. Due to the high similarity, 16\% of \textbf{Resort} images are misclassified as \textbf{Park}. The classification results of the easily confused \textbf{dense residences}, \textbf{medium-sized residences} and \textbf{sparse residences} are 96\%, 93\% and 99\%, respectively. The sparsest confusion matrix \ref{idccp:uc_merced_cm} is obtained by evaluating the proposed framework on the UC-Merced Land-Use dataset \cite{dataset:yang2010bag} with the training ratio of 50\%. Especially, the accuracy of the \textbf{medium residence} category is 89\% (0.09\% of images are misclassified as \textbf{intensive residence}, which is the only category with an accuracy of less than 90\%. On the OPTIMAL-31 dataset \cite{dataset:wang2018scene}, there are 20 categories of test data that can be classified 100\% correctly \ref{idccp:optimal31_cm}, including those challenging categories, such as \textbf{Church} and \textbf{Island}. 

\subsection{Ablation Study and Analysis}  
\begin{table}[]
	\centering
	\caption{Comparison of classification accuracy and single image inference time. Experiments were conducted on NWPU-RESISC45 dataset \cite{dataset:cheng2017remote} with using 10\% training samples.}
	\scalebox{0.94}{
		\begin{tabular}{c|c|cc|cc}
			\toprule
			\multirow{2}{*}{Networks} & \multirow{2}{*}{Feature Dim.} & \multicolumn{2}{c}{Accuracy (\%)} & \multicolumn{2}{c}{Time (sec/per image)} \\ \cline{3-6}
			&                               & w/ D4           & w/o D4          & w/ D4               & w/o D4             \\\midrule
			\multirow{4}{*}{ResNet50 \cite{resnet}} & 2048                          & -               & 90.02           & -                   & 0.0219             \\
			& 512                           & 91.64           & 90.05           & 0.0768              & 0.0105             \\
			& 64                            & 91.26           & 89.94           & 0.0744              & 0.0093             \\
			& 16                            & 90.78           & 89.83           & 0.0721              & 0.0087             \\\midrule
			\multirow{3}{*}{VGGNet \cite{vgg}}   & 512                           & 91.11           & 89.44           & 0.0324              & 0.0063             \\
			& 64                            & 89.78           & 88.34           & 0.0322              & 0.0059             \\
			& 16                            & 88.62           & 87.21           & 0.0317              & 0.0052  \\\bottomrule          
	\end{tabular}}\label{idccp:dims_exp}
\end{table}
\subsubsection{Compactness and Effectiveness}
In Table \ref{idccp:dims_exp},  extensive
results are listed to show the effect of feature dimensionality and D4 transformation group on classification accuracy and a single image inference time. For a fair comparison, we ensure that all hyperparameters are consistent and then obtain the interaction time of a single image by calculating the ratio of the total test duration to the number of test samples. When we reduce the feature size, the gap in classification accuracy is not significantly enlarged. For example, with ResNet50 architecture \cite{resnet}, the accuracy only decreases by 0.86\% even if we compress the feature space to 1/64 of the original feature space. It is worth noting that our IDCCP model allows features to be compressed into a very compact space (i.e., 16$\times$16) without sacrificing too much accuracy. Interestingly, the classification accuracy is slightly improved when we adopt 1 $\times$ 1 convolution layer to map the CNN feature to a lower feature space. The reason for this phenomenon is that 1 $\times$ 1  convolution can reduce the diversity and redundancy of feature maps, thereby improving the discriminative power of learned feature \cite{sos:wei2018grassmann}. Due to the limited capability of our PC, the accuracy of equipping the D4 transformation group has been omitted. However, this hardly affects us to investigate the effectiveness of the D4 transformation group. At the feature size of 16$\times$16, the IDCCP model based on ResNet50 \cite{resnet} achieved 89.9\% accuracy, which can exceed the full-rank constrained VGGNet model. It not only influenced by the superior structure of ResNet50 \cite{resnet} but also reflects the effectiveness of the projection layer. In addition, ResNet50  \cite{resnet}-based IDCCP model, a single image inference time, only needs about 0.07 and 0.01 seconds for equipping or not equipping the D4 group, respectively. Due to the relatively shallow CNN structure, the inference time reduce by half when using VGGNet \cite{vgg} architecture.
\begin{figure}[]
	\centering
	\includegraphics[width=0.48\textwidth]{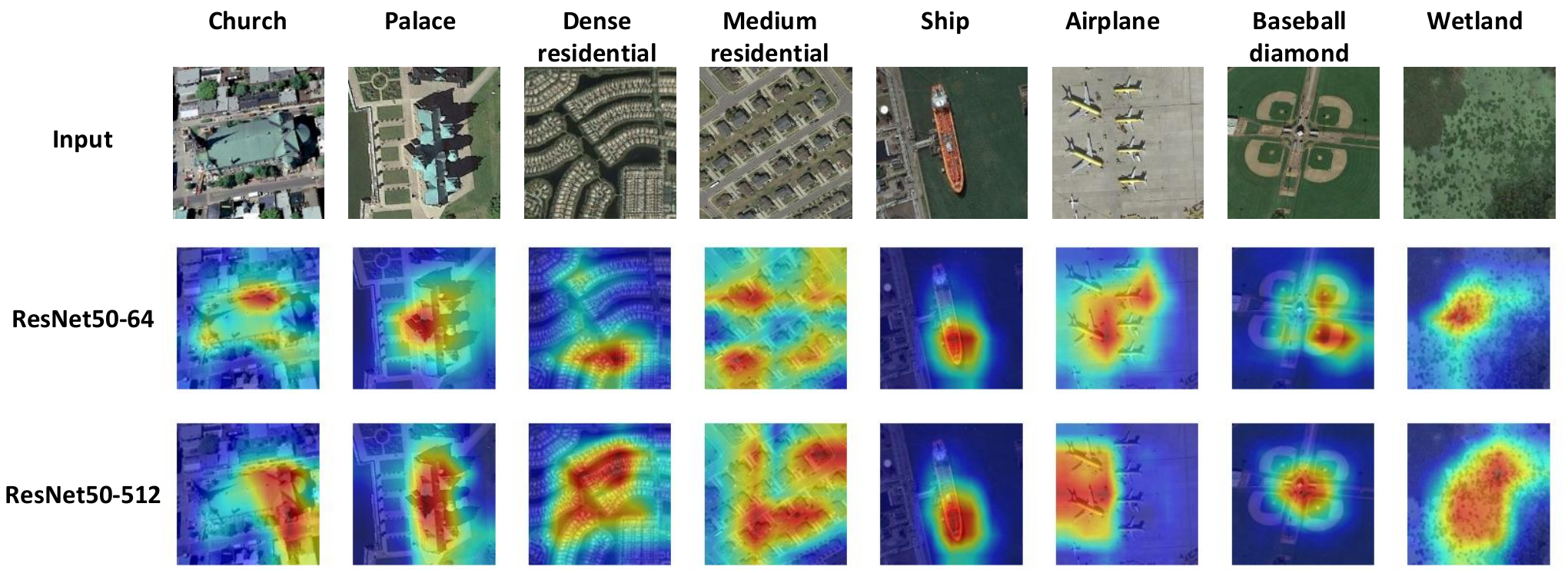}
	\caption{Selected images for qualitative visualisation.}
	\label{idccp:cam}
\end{figure}
\subsubsection{Qualitative Visualization}
Through the comparison of the above experiments, we find that the overall accuracy of the compressed model can be kept at the same par with the uncompressed model. This prompted us to find the evidence from the interpretability of the model. As shown in. \ref{idccp:cam}, we select example images from the NWPU-RESISC45 datasets \cite{dataset:cheng2017remote} and show the corresponding heatmaps by utilizing the Grad-Cam algorithm \cite{cam}. When using ResNet50-64 architecture, we found that our model can focus on small patches that benefit to distinguish subtle differences between visually similar images, such as \textbf{Church} and \textbf{Palace}, \textbf{Dense residential} and \textbf{Medium residential}. Compared with ResNet50-64, the area of attention map is significantly expanded when using ResNet50-512 model. Namely, it allows the model to capture more texture information and could be the reason why ResNet50-512 performs slightly better than ResNet50-64 model.
\section{Conclusion}
In this article, we proposed a novel IDCCP model to handle
the variations in the classification of aerial images. The model benefits from the use of Siamese CNNs to learn the trivial representation of the predefined transformation group. The obtained representation can be deployed to the scenarios of the second-order representation. Meanwhile, we endowed the weight matrix with the form of Stiefel manifold and employed it to reduce the dimensions of the SPD manifold. Finally, the resulting features are flattened to train invariant classifiers. In the future, we will focus on exploring the impact of more complex group structures.
\section{Acknowledgment}
The authors are very grateful to the editor and all reviewers for their valuable comments to improve this article. This work was funded by EPSRC DERC: Digital Economy Research Centre (EP/M023001/1).

\bibliographystyle{IEEEtran}
\bibliography{references}
\end{document}